\documentclass[Afour,sageh,times]{sagej}
\usepackage{moreverb,url}
\usepackage{natbib}
\usepackage[colorlinks,bookmarksopen,bookmarksnumbered,citecolor=red,urlcolor=red]{hyperref}
\usepackage{amsmath}
\usepackage{amssymb}
\usepackage{mathtools}
\usepackage{balance}
\usepackage{graphics}
\usepackage{psfrag}
\usepackage{subfigure}

\newcommand\BibTeX{{\rmfamily B\kern-.05em \textsc{i\kern-.025em b}\kern-.08em
T\kern-.1667em\lower.7ex\hbox{E}\kern-.125emX}}

\usepackage{latexsym}
\newtheorem{proposition}{Proposition}
\newtheorem{remark}{Remark}
\newtheorem{definition}{Definition}
\newtheorem{lemma}{Lemma}

\usepackage{amsmath}
\usepackage{xspace}
\newcommand{\bpc}{binormalized Pl\"ucker coordinates\xspace}
\newcommand{\npc}{normalized Pl\"ucker coordinates\xspace}
\newcommand{\dd}[0]{\mbox{2{\sc d}}\xspace}
\newcommand{\ddd}[0]{\mbox{3{\sc d}}\xspace}

\newcommand{\matx}[1]{\mathbf{\uppercase{#1}}}

\newcommand{\vect}[1]{\mathbf{\lowercase{#1}}}
\newcommand{\unit}[1]{\vect{\underline{#1}}}
\newcommand{\Pnt}[1]{\mathnormal{\uppercase{#1}}}
\newcommand{\pnt}[1]{{\lowercase{#1}}}
\newcommand{\drte}[1]{{\lowercase{#1}}}
\newcommand{\Drte}[1]{{\uppercase{#1}}}

\newcommand{\id}[1]{\matx{I}_{#1}}
\newcommand{\zero}[2]{\matx{0}_{#1 \times #2}}

\newcommand{\hn}[0]{\unit{h}}
\newcommand{\un}[0]{\unit{u}}
\newcommand{\Vlin}[0]{\mathbf{V}}
\newcommand{\Vrot}[0]{\boldsymbol{\Omega}}

\renewcommand{\l}{\ell}
\renewcommand{\L}{\mathcal{L}}

\begin{document}
\runninghead{Andreff, Horaud and Espiau}
\title{Visual Servoing from Lines}
%\thanks{This work was supported by the European 
%  Community through the Esprit-IV reactive LTR project number 26247
%(VIGOR).
%During this work, Nicolas Andreff was staying at INRIA
%Rh\^one-Alpes as a Ph.D. student.}
\author{Nicolas Andreff\affilnum{1},  Bernard Espiau\affilnum{2} and Radu Horaud\affilnum{2}}
\affiliation{\affilnum{1}Institut Fran\c{c}ais de M\'ecanique Avanc\'ee,
BP 265, 63175 Aubi\`ere Cedex, France.
\affilnum{2}INRIA Rh\^one-Alpes and GRAVIR-IMAG,
655, av. de l'Europe, 38330 Montbonnot Saint Martin, France.}

\begin{abstract}
In this work, we present a fundamentally new approach to visual
servoing using lines. It is based on a theoretical and geometrical
study of the main line representations which allowed us to define a
new representation, the so-called binormalized Pl\"ucker
coordinates. They are particularly well suited to visual
servoing. Indeed, they allow the definition of a proper image line
alignment concept. The control law which realizes such an alignment has
moreover several properties: partial decoupling between rotation and
translation, analytical inversion of the motion equations and global
asymptotic stability conditions. This control law was validated both
in simulation and experimentally in the specific case of an orthogonal
trihedron.
\end{abstract}

\maketitle

%\section{Introduction}
\section{Motivation}
%\input{intro}
%\usepackage{palatino}
%\fontfamily{ppl}\selectfont
In this section we set the contribution of the paper in the 
general framework of visual servoing. After a brief analysis of the
evolution of the domain, we emphasize the need for considering
more systematically geometrical issues  as a way of improving the control.
\subsection{A Short Critical History of Visual Servoing}
The story of visual servoing goes back to  the seventies. At this time, the concept  of  robot began to to be commonly accepted  as the one of  a machine liable to interact (sense and act)
with its external environment with some degree of autonomy. Among the sensors
allowing to deliver the required measurements, cameras have immediatly
fascinated researchers for two reasons: the information which could potentially
be extracted from a set of images seemed almost without limits; the capacity
of providing a robot with vision fonctionnalities brought it nearer from human
skills.  Owing to this dramatic amount of information provided by cameras, the initial ambition in robotics was generally to {\em understand}
the world,  in order to decide, through symbolic approaches, how to act.  The
idea of using cameras inside a control loop came later from a rather
different point of view: by selecting adequate ''signals'' in an image, a camera becomes a sensor 
associated with a dynamical system, providing at every time it is needed with a kind of
error measurement with respect to a goal value. These two approaches grew
separatly from the  eighties: the first was linked to the computer vision domain, mainly leaded
by computer scientists and applied mathematicians, while the second gave rise
to the so-called {\em visual servoing} approach, investigated by people from
the automatic control area\footnote{a pioneering work is \citep{weiss87}.}. Formally, visual servoing is the art of designing
a control loop based on camera measurements, in the following way: 
\begin{itemize}
\item the control is the input $u$ of a system producing a {\em relative} euclidean motion (in $SE(3)$), between one or several camera(s)
and an observed target. The considered order of the system (1st or 2nd) and the nature of the
input space ($T \in  TSE(3))$, velocities or accelerations in a parametrized configuration
space $\{q\}$, actuator torques) depend on the type of used robot, on the assumptions made
about the overall dynamics and on the possible existence of internal loops;
\item the output space  is the one in which the task to achieve, $y^*$  is specified, {\em
and} in which the actual measurement $y$ is computed from image features, denoted
as $s$, which can be in addition a function of time. It should be emphasized
that whatever this final variable is (geometric figures in a image, estimated pose,
etc...), the specifity of the domain is that, somewhere in the process, a 
projective transformation is involved.
\end{itemize}
The domain is now rich of some hundreds of references and our goal is not to produce here an exhaustive survey (one of the first reviews on the subject is \citep{corke94} and a tutorial may be found in \citep{hutchinson96}). We will only focus
at the end of this section on the matter which is central to this paper,
illustrated with a few references. For the moment, let us  briefly evoke the main paths followed by the researchers this two last decades. They belong to three main classes: defining the output function, finding its 
variation and synthetizing the control. 
\begin{enumerate}
\item Task specification was one of the earliest addressed problem and we can nevertheless
say that it is still object of attention, a recent attempt to formalize it properly being found
in \citep{hespanha99}. The basic concern lies in the fact that there are many
ways of defining $y$ from $s$, with very different properties: the
trajectories in the various involved spaces can have extremely different shapes
even if the convergence is ensured; the existence and the
nature of possible singularities strongly depends on the choice of the output
function; the positionning accuracy is linked to the sensitivity of $y$ with respect to $u$. This is at the origin of numerous works, that are usually classified in two basic  categories: 2D ({\em image-based} - working directly on $s$) or 3D ({\em euclidean position
-based} - requiring reconstruction or pose estimation). Roughly, the first one is supposed to be less
sensitive to calibration uncertainties, the counterpart being that trajectories
of the moving system in the 3D space may be unpredictible. The second involves
more calculations and is less robust, but trajectories can be specified in the
euclidean space such that, for example, obstacles can be avoided. However, in that
approach, the relevant image features may leave the vision field. To overcome
these difficulties, researchers have imagined several methods, among them:
\begin{itemize}
\item to define optimal image features for a given application or find adequate representations \citep{fermuller98};
\item to specify goal image {\em trajectories} $s^*(t)$ instead of goal
positions \citep{martinet97}. The idea is based on the fact that the transient trajectories are
not predictible since the initial errors are large. Following a feasible
output trajectory with small errors is a way of being sure to stay in a true
regulation framework with better properties;
\item to use dynamical information \citep{cretual97a,chaumette94};
\item to combine specifications in 2D and 3D \citep{malis99a} or other spaces \citep{deguchi96}. We will see that the approach we propose fall in this category.
\end{itemize}

\begin{figure}[t!]
\centerline{\includegraphics[width=0.8\columnwidth]{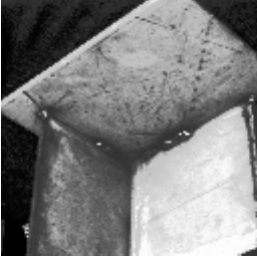}}
\caption{\label{photo_triedre} Detail of a part to be welded.}
\end{figure}

\item
A second  issue in the area was soon recognized as of critical importance:
the study of the variation of $y$. If we denote as $\bar{r}$ an element of $SE(3)$, and recalling that $q$ is the parametrization
of the space of actuation, we have 
\begin{equation}
\frac{\partial y}{\partial q} =\frac{\partial y}{\partial s }\frac{\partial s}{\partial \bar{r}} \frac{\partial \bar{r} }{\partial q}
\end{equation}
and the usual question is: what do we know about this expression? Note that the more relevant question,  what do we {\em really need} to know, is in fact related to robustness of the control.
	The last term refers to the physical system and its
knowledge involves robot calibration aspects. It should
be ignored if the control is defined only as the desired velocity screw of the
camera frame. The first term is supposed to involve analytic transformations
from the image space to the output space, which are user-defined and in
principle liable to be properly specified. The main concern lies in the
expression $\frac{\partial s}{\partial \bar{r}}$ called feature Jacobian,
interaction matrix or image jacobian. Since this matrix, denoted as $L$, results
from the differentiation of a mapping from $SE(3)$ to some subset of $R^n$, each
of its columns is an element of the cotangent space of $SE(3)$, in other terms a 
screw. It can therefore be systematically expressed in various frames using an
adjoint operator. As shown in \citep{espiau92}, where a general method of computation was proposed, it is also in general an explicit function of image measurements, camera calibration parameters and euclidean features. Since it is rarely possible to
capture all the theoretically involved variables, a great effort has been put for finding approximate expressions, $\hat{L}$: off-line estimation by learning the
matrix {\em entries} in suitable regions of the workspace \citep{hosoda95,jagersand96,deguchi96,sutanto98}; on-line identification
of some constant {\em parameters} related to the calibration \citep{kinoshita94}; use of the value
of $L$ computed at the goal position, etc... Besides, the euclidean variables
are set to ``reasonnable'' estimated values.
\item
The third research direction is closely related to the previous one. Indeed, when the
system is controlled through a feedback loop, the positioning accuracy mainly depends
on the capacity of the proportional gain to compensate for sources of steady-state
errors, like unmodelled potential-based terms, friction forces or unknown autonomous
feature motion. Besides, the effect of the
more or less good knowledge of $L$ concerns the transient stage: rate of convergence
and decoupling. In fact, intuitively speaking, stability is ensured if $\hat{L}$
is such that the generated motion has a component in the direction of the goal. This
refers to {\em robustness} issues, which were implicitly the main concerns of 
researchers from the automatic control side in visual servoing\footnote{ It is
also worth mentioning that the discrete-time aspects issued from the inherent
latency of the couple $\{$image acquisition - data processings$\}$ were also
an important and necessary topic of investigation  \citep{corke96}, while out of the scope of the
present paper.}: study of the sensitivity to calibration errors \citep{espiau93,miyazaki91},
 H-infinity control,
adaptive control, besides more classical approaches like LQ optimal synthesis \citep{hashimoto96}.
\end{enumerate}

Finally, up to the nineties, and but a few exceptions,
 works in visual servoing were mainly characterized by:
\begin{enumerate}
\item the use of a single camera linked to the controlled moving system;
\item an approach of the problem focused on automatic control
aspects and based on simple task definition ;
\item a system calibration assumed to be good enough to avoid disturbing the control too much;
\item the absence of a geometrical analysis deep enough to exploit the specificity of
the domain.
\end{enumerate}

This point of view culminated with the 1992 synthesis paper \citep{espiau92}. 
In the meantime, very relevant progresses in understanding the geometry of vision
began to inseminate the community. The use of projective and affine geometry began to 
appear rich of potentialities in visual servoing, mainly because it allows to
relax in some sense 
the problem of calibration and to obtain easily non-metric or quasi-metric
information very useful in all the three previously described issues.

Since 1993,
more and more papers began to incorporate these new approaches  (see for example
\citep{hollinghurst94,grosso95,joshi96,hosoda95,hager95,maru93,soatto94,hager97a,horaud98}
for stereovision).

\subsection{Geometry as a Central Issue in Visual Servoing}
But we can find also in the literature  recent works where a single camera is used, either
in an eye-in-hand approach \citep{yoshimi95,spratling96,cross96,colombo95,malis99a} or
as a global scene sensor \citep{ghosh96}. Affine geometry is used in \citep{cross96,colombo95}
and  \citep{spratling96} considers a weak perspective model. In \citep{malis99a}, a partial
pose computation allows the authors to propose a control scheme mixing image
features and quasi-geometric issues. It should be noticed that, like in the case of
stereovision, sensitivity to calibration uncertainties is here also a major concern. Many of the
cited references claim their robustness, obtained either by using a few specific motions
for self-calibration \citep{spratling96,cross96} or intrinsically, in general
implicitly thanks to the use of some geometric properties
\citep{malis99a,yoshimi95}). 
Clearly,
we can understand this need for considering explicitely  geometrical aspects from the
following facts:
\begin{enumerate}
\item when using camera(s), the process for obtaining the data used in the feedback 
involves always a projective transformation which confers particular characteristics
to the models;
\item the efficiency of control loops is often better if some information about
the euclidean space can be recovered from image data, which requires
dedicated geometrical methods like reconstruction or pose estimation;
\item as soon as we use structures richest than simple points, features in the image
 result from 3D entities which  belong to particular
manifolds. The choice of the parametrization of these manifolds (and of the
projected structures) strongly influences the design of the control law. Among
the expected benefits are the better decoupling, the easier design and
stability analysis of the control.
\end{enumerate}

\subsection{Motivation of this Work}
The present paper is mainly concerned with this last aspect, and more precisely
with the use of lines. At the origin of the problem is the fact that when
dealing with industrial parts, the extraction and the tracking of lines is natural
and, now, feasible in real time thanks to new efficient algorithms from
computer vision \citep{lowe91a,marchand98a,tonko00,Armstrong95,drummond00}. Nevertheless,
only a few works have really explored the use of lines in 
visual servoing: \citep{rives87b,espiau92,motyl92a,debain96a,hager97a}
are examples we find in robotics.  
 Tracking of lines was also considered for the automatic road following of autonomous vehicles,
like in \citep{lutzeler98}, but without deriving generic approaches.

 As explained in section \ref{results}, the experimental
setup we considered in the framework of the european project VIGOR was a ship part
provided by the Odense Steel Shipyard in Denmark for which it was required
to accurately position a welding tool. Several intersecting planes were present in
the part, making the extraction of lines easier than that of points.  The question which rapidly 
occured was then: what representation for these lines?

Coming back to geometrical aspects, it should be recalled that the 3D Euclidean lines form
a 4-dimensional manifold the properties of which are described by the Grassman's geometry. 
One of the most noticeable property  is the absence of a natural (Riemanian) distance. Since
such a distance is somewhere needed for defining a feedback control, the choice of 
the parametrisation is crucial. Among the classical ones, the Pl\"{u}cker coordinates
look particularly interesting since  they come from the embedding of the line
manifold in a projective space in which things are linear. More, recalling that 
the camera transformation is also projective, we understand intuitively that some
nice properties could be expected. In this framework, the main  original contributions
of the paper are the following. First, we propose and discuss a restriction on classical
Pl\"{u}cker coordinates, which looks very pertinent in visual servoing. Secondly, we
design a control law on this submanifold  and prove its stability. Then we apply the method
to an orthogonal trihedron, and extend it to a control scheme including  a laser beam.

%\section{Notation}
\begin{table}[ht]
\footnotesize
\fbox{\begin{minipage}{\linewidth}
\begin{itemize}
\item Vectors are denoted with lower case upright bold letters
(e.g.$\vect{u}$). Unit vectors are denoted as underlined vectors
(e.g. $\unit{u}$). 
\item Points in the \ddd space are written with upper case bold
letters ($\Pnt{P}$) while points in the image are written with
lower-case bold characters ($\pnt{p}$). 
\item \ddd Lines are noted with calligraphic upper case letters (e.g. $\L$)
and image lines with calligraphic lower case letters (e.g. $\l$).
%\item Matrices are denoted with upper-case bold-face capital letters
%(e.g. $\matx{A}$). 
\item Angular velocity is written $\Vrot$ and linear velocity $\Vlin$.
\item The scalar product is noted with the transpose formalism
(e.g. $\vect{a}^T \vect{b}$) and the cross product of $\vect{a}$ by $\vect{b}$ is written
$\vect{a} \times \vect{b}$.
\item The notation $^*$ (e.g. $\unit{u}^*$) denotes a desired value and the notation
$(t\!=\!0)$ (e.g. $\unit{u}(t\!=\!0)$) represents an initial value. 
\end{itemize}
\end{minipage}
}
\caption{Notations used in this paper}
\end{table}

\section{Line parameterization}
\subsection{Binormalized Pl\"ucker coordinates}
\subsubsection{Recalls on Pl\"ucker coordinates}

In \citep{plucker65}, Pl\"ucker derived 8 kinds of coordinates for a line
in space  (or \ddd line) from  the equations, in  the projective space
$\mathcal{P}^3$, of two intersecting planes.
Among these 8 kinds of coordinates, the so-called Pl\"ucker
coordinates are defined as follows 
\citep{bottema79,faugeras93a,semple52a}.  

Consider any two points, in the projective space,
$\Pnt{P}=(x_1,x_2,x_3,x_4)$ and 
$\Pnt{P}'=(x'_1,x'_2,x'_3,x'_4)$ a the \ddd line $(\L)$. The latter
is uniquely defined by these two points. However, this representation
is far from being minimal since it makes use of 8 values to
parameterize the 4-dimensional manifold of lines. 

To reduce the amount of parameters, one considers the 12 $(2 \times 2)$
minors of the matrix $\left(
\begin{smallmatrix}
x_1&x_2&x_3&x_4\\
x_1'&x_2'&x_3'&x_4'
\end{smallmatrix} \right)$. Six of them can be chosen
independent. Moreover, they are unique up to a single scale factor for 
every choice of the two points $\Pnt{P}$ and
$\Pnt{P}'$\cite[p24]{faugeras93a}. 

Thus, the projective Pl\"ucker coordinates are defined by:
%\begin{equation}
%\L=\begin{pmatrix} p_{41}& p_{42}& p_{43}& p_{23}& p_{31}&
%p_{12}\end{pmatrix}^T \quad with \quad
% p_{ij} = x_i x'_j - x'_i x_j
%\label{coord_pluck_proj}
%\end{equation}
%which we rewrite as:
\begin{equation}
\label{coord_pluck_proj}
\L=\begin{pmatrix}
\vect{u}\\ \vect{v} 
\end{pmatrix} 
\quad with \quad
\begin{matrix}
\vect{u}=(p_{41}, p_{42}, p_{43})^T\\
\vect{v}=(p_{23}, p_{31}, p_{12})^T
\end{matrix}
\quad p_{ij} = x_i x'_j - x'_i x_j
\end{equation}
Notice that this is a bivector as McCarthy defined
it\citep{maccarthy90a} to represent kinematic screws. Consequently,
as for $se(3)$\cite[p.~427]{murray94a}, there does not exist a
bi-invariant Riemannian metric on the manifold of \ddd lines.

One recognizes in $\vect{u}$ the first 3 coefficients 
of the 4-vector $x_4 \Pnt{P}' - x'_4 \Pnt{P}$, the fourth one being
zero. It is a classical result in projective geometry that this
4-vector is the point on $(\L)$ which lies at the 
infinite, and thus represents its direction.
Hence, $\vect{u}$ corresponds to the direction of $(\L)$.

The 6 homogeneous coordinates in~(\ref{coord_pluck_proj}) define a
point, sometimes called the \emph{Pl\"ucker point}\citep{goodman97}, in
the 5 dimensional projective space. Consequently, 5 of these
coordinates are independent. Hence, one extra relation is needed to
turn down to the 4-dimensional manifold of lines. The usual extra relation
expresses that a line intersects itself.

Consider two lines $(\L_1)$ and $(\L_2)$ respectively
defined by the pairs of points $(\Pnt{P},\Pnt{P}')$ and
$(\Pnt{Q},\Pnt{Q}')$. $(\L_1)$ and $(\L_2)$ intersect if
$\Pnt{P}$, $\Pnt{P}'$, $\Pnt{Q}$ and $\Pnt{Q}'$ are coplanar:
\[
\begin{vmatrix}
x_1&x_2&x_3&x_4\\
x_1'&x_2'&x_3'&x_4'\\
y_1&y_2&y_3&y_4\\
y_1'&y_2'&y_3'&y_4'
\end{vmatrix}=0
\]
Expanding this determinant, called {\em Grassmannian}, one obtains:
\begin{equation}
\label{pq=0}
p_{41}q_{23}+p_{42}q_{31}+p_{43}q_{12}+p_{23}q_{41}+p_{31}q_{42}+p_{12}q_{43} 
= 0
\end{equation}
or, with our notations:
\begin{equation}
\vect{u}_1^T\vect{v}_2 +\vect{u}_2^T\vect{v}_3=0
\end{equation}
The missing relation is thus:
\begin{equation}
\label{Klein}
\vect{u}^T\vect{v}=0
\end{equation}
which defines a 4-dimensional manifold of
$\mathcal{P}^5$, called the {\em Klein quadric}\citep{pottman98}
or the {\em Pl\"ucker hypersurface}\citep{chazelle96a}.
Thus, any point on this quadric corresponds to a projective line in
the physical world and reciprocally. This correspondence is unique if
the lines and  the projective space are given an orientation, as
suggested by Stolfi\citep{stolfi91a}. 

As for points, the upgrade from projective to Euclidean coordinates of
a  \ddd  line  is  obtained  by normalization.   Thus,  the  Euclidean
Pl\"ucker  coordinates  of  a \ddd  line  can  be  seen as  {\em  \npc
}\citep{pottman98}.   The  normalization which  is  commonly  chosen is  that
$\vect{u}$ should be a unit vector. Therefore, the \npc of a \ddd line
defined     by      the     (projective)     Pl\"ucker     coordinates
$(\vect{u},\vect{v})$ are:
\begin{align}
\L&=(\unit{u},\vect{h})\\ 
\unit{u}^T\vect{h}&=0\\
\unit{u}^T\unit{u}&=1
\end{align}
where               $\unit{u}=\vect{u}/\|\vect{u}\|$               and
$\vect{h}=\vect{v}/\|\vect{u}\|$.  It can  easily be shown that, given
any \ddd point $\Pnt{P}$ on $(\L)$, one has:
\begin{equation}
\label{h=Pxu}
\vect{h}=\Pnt{P}\times \unit{u}
\end{equation}
Consequently, the following equivalence holds:
\begin{equation}
\vect{h}=0 \Leftrightarrow (\L) \ passes\ through\ the\ origin
\end{equation}

\paragraph{Working assumption --} Since the case where a line passes
through the center of projection is degenerate in computer vision, we will
assume in the remainder of this paper that it never occurs.

\begin{figure}[ht]
\psfrag{h}{$\vect{h}=h \hn$}
\psfrag{u}{$\unit{u}$}
\psfrag{uxn}{$\unit{u} \times \hn$}
\psfrag{v}{$\vect{v}$}
\psfrag{w}{$\vect{w}$}
\psfrag{n}{$\hn$}
\psfrag{d}{$\drte{d}$}
\psfrag{D}{$\Drte{D}$}
\psfrag{P}{$\Pnt{H}$}
\psfrag{p}{$\pnt{p}$}
\psfrag{O}{$\Pnt{O}$}
\centerline{\resizebox{!}{5cm}{\includegraphics{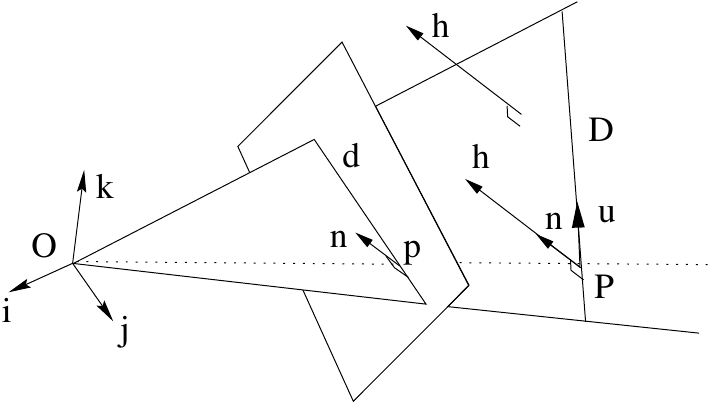}}}
\caption{\label{interpretation_plucker}Geometric interpretation
of the \npc of a line.}
\end{figure}

\paragraph{Geometric interpretation} In the computer vision community,
the geometrical interpretation of 
the \npc is the following (see  Figure~\ref{interpretation_plucker}):
$(\L)$  is the line  oriented by the unit  vector $\un$
and passing through  $\Pnt{H}=\un\times\vect{h}$, which is the closest
point  to the  origin  on  $(\L)$\citep{navab}.  It  can also be noticed
from~(\ref{h=Pxu}) that  $\vect{h}$ is orthogonal to  the plane passing
through  the  origin  and  containing $(\L)$.  This  plane  also
contains the projection $(\l)$  of $(\L)$ onto the image plane
and is referred to as the {\em interpretation plane} of $(\l)$.
As the interpretation plane passes through the origin, it is defined
by:
\begin{equation}
ax+by+cz=0
\end{equation} 
Considering any point in the interpretation plane, such that its $z$
coordinate is not zero, one can rewrite the previous equation as:
\begin{equation}
a \frac{x}{z}+b \frac{y}{z} + c =0
\end{equation}
where we recognize the equation of a line in the image plane. This
line is thus the intersection of the image plane and the
interpretation plane, hence~$(\l)$.

Therefore, ($\l$) is uniquely defined by the interpretation plane, hence
by its unit normal vector $\hn=\vect{h}/h$ where $h = \| \vect{h} \|$.
Consequently, the normalized Pl\"ucker coordinates of a line ($\L$) can
be expressed as:
\begin{equation}
\L = (\un,h \hn)
\end{equation}
with the following constraints:
\begin{align}
\un^T\hn &= 0\\
\un^T\un = \hn^T \hn &= 1
\end{align}
Since the projection ($\l$) of ($\L$) onto the image defines the
interpretation plane, it has for coordinates those of the normal
3-vector to this plane:
\begin{equation}
\l = \hn
\end{equation}

\subsubsection{Binormalized Pl\"ucker coordinates}
\label{bpc}

\paragraph{Line parameterization}

From the previous section, we are now ready to define the \bpc, as a
way to parameterize \ddd lines, and to give some properties of these coordinates.

\begin{definition}
The \bpc of the pencil of all the \ddd lines oriented by the unit
vector $\un$ and lying in the plane of unit normal vector $\hn$ are the couple $(\un,\hn)$
where
\begin{align}
\un^T\hn &= 0\\
\un^T\un = \hn^T \hn &= 1
\end{align}
\end{definition}

By extension, we will call \bpc of a \ddd line the \bpc of the pencil
it defines.
With this definition, we can trivially state the following
proposition:

\begin{proposition}[\ddd line representation]
A \ddd line ($\L$) is uniquely defined by its \bpc $(\un,\hn)$ and its depth
$h$:
\begin{equation}
\L = \begin{pmatrix} \hn\\ \un\\ h \end{pmatrix}
\end{equation}
\end{proposition}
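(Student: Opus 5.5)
The plan is to reduce the statement to the bijection between oriented \ddd lines not through the origin and their normalized Pl\"ucker coordinates $(\un,\vect{h})$ recalled above. We then observe that $(\un,\vect{h})$ and $(\un,\hn,h)$ carry the same information, as long as $h>0$. By the working assumption, no line passes through the center of projection, so $\vect{h}\neq 0$ and $h=\|\vect{h}\|>0$.

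First, the map from lines to triples. Given an oriented line $(\L)$, its normalized Pl\"ucker coordinates $(\un,\vect{h})$ are uniquely determined. Indeed, the projective coordinates (\ref{coord_pluck_proj}) are unique up to scale, and the normalization $\un^T\un=1$ fixes the scale up to sign. The orientation of the line then fixes the sign. Since $h>0$, setting $h=\|\vect{h}\|$ and $\hn=\vect{h}/h$ is well defined. The constraints $\un^T\hn=0$ and $\hn^T\hn=1$ follow from $\un^T\vect{h}=0$ and the definition of $\hn$. So $(\un,\hn)$ are \bpc in the sense of the Definition, and $h$ is the depth.

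Second, injectivity: from the triple we recover the line. From $(\hn,\un,h)$ we form $\vect{h}=h\hn$. We then use the geometric interpretation: $(\L)$ is the line with direction $\un$ through $\Pnt{H}=\un\times\vect{h}$. To verify this, take any point $\Pnt{P}$ on $(\L)$, so that $\vect{h}=\Pnt{P}\times\un$ by (\ref{h=Pxu}). Then
\begin{equation*}
\un\times(\Pnt{P}\times\un)=\Pnt{P}-(\un^T\Pnt{P})\un ,
\end{equation*}
which is a point of $(\L)$. This gives an explicit inverse, so two lines with the same triple coincide.

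Third, every admissible triple arises from some line. Given $\un,\hn$ orthonormal and $h>0$, define $(\L)$ as the line through $\Pnt{H}=h\,\un\times\hn$ with direction $\un$. Using $\un^T\hn=0$, we check that $\Pnt{H}\times\un=h\hn$, so this line has the prescribed coordinates.

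The main obstacle is the bookkeeping of signs and orientation. For an unoriented line, $(\un,\hn)$ and $(-\un,-\hn)$ describe the same line, and the sign of $\hn$ also depends on the orientation of the interpretation plane. I would handle this by working with oriented lines, following Stolfi as the paper suggests, so that the correspondence is a genuine bijection. Otherwise, uniqueness holds only up to the simultaneous sign change $(\un,\vect{h})\mapsto(-\un,-\vect{h})$, which leaves $h$ unchanged.
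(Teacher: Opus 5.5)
Your proof is correct and follows the route the paper has in mind. The paper states the proposition as immediate from the normalized Pl\"ucker coordinates $\L=(\un,h\hn)$ and the reconstruction $\Pnt{H}=h\,\un\times\hn$ of Remark~\ref{h=|H|}, which is exactly your injectivity/surjectivity argument, and your explicit use of $h>0$ and oriented lines supplies the care the paper leaves implicit (note that once $\un$ is fixed, the sign of $\hn$ is forced by $\vect{h}=\Pnt{P}\times\un$, so it is not an independent orientation choice).
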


Notice that, as we consider ($\L$) lying in a pencil of
parallel lines, $h$ is not only a distance but also becomes a depth.

\paragraph{Geometrical interpretation}
Our representation yields a sequential interpretation
(Figure~\ref{interpretation_seq}): $(\L)$ is the line in the
interpretation plane of $(\l)$ (defined by its normal vector $\hn$)
with orientation $\un$ and lying at depth $h$. This interpretation
corresponds to the usual sequence followed in image analysis to
determine a \ddd line from one or several images: extraction of 
$(\l)$ from the image, then computation of the orientation of $(\L)$,
and finally, computation of the \ddd position of $(\L)$.

Before going further let us notice that from the definition of the
\bpc, we can make the following two remarks:

\begin{remark}
\label{bon}
The triple $(\un,\hn,\un\times\hn)$
forms an orthonormal basis of the \ddd space.
\end{remark}

\begin{remark}
\label{h=|H|}
Let $\Pnt{H}$ be the closest point to the origin n a line
$\L=(\hn^T,\un^T,h)$. Then, $\Pnt{H} = h \un \times \hn$.
\end{remark}

Finally, representing lines with \bpc and depth allows to generate
some well-known geometrical objects from lines, by fixing one or
several components to a given value. For instance, the lines with a
given depth $h$ generates a sphere of radius $h$. Figure~\ref{objets}
gathers the other cases.

\begin{figure*}[t]
\psfrag{h}{$h$}
\psfrag{u}{$\unit{u}$}
\psfrag{hn}{$\hn$}
\psfrag{d}{$\l$}
\psfrag{D}{$\L$}
\psfrag{O}{$\Pnt{O}$}
\centerline{\includegraphics[width=0.8\textwidth]{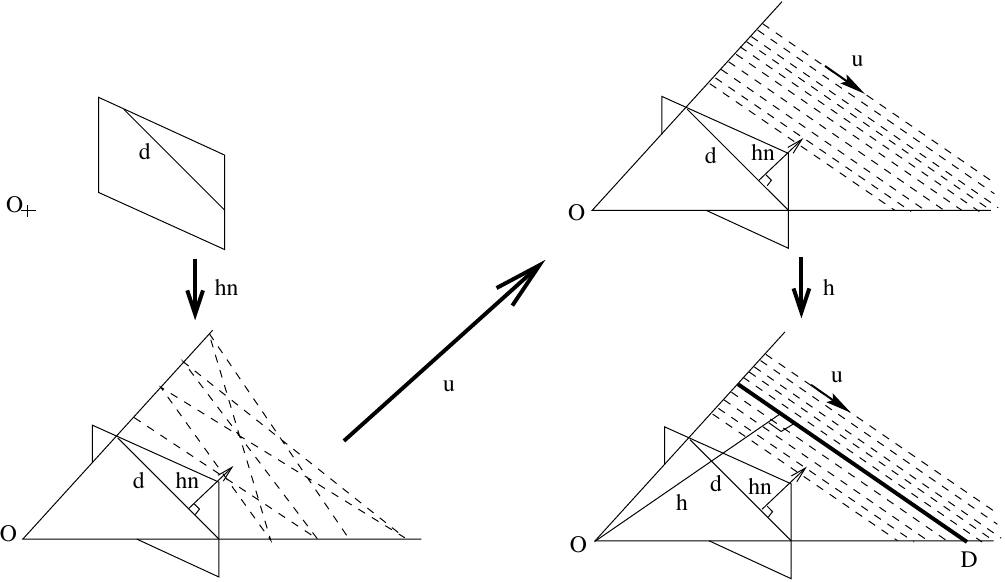}}
\caption{The binormalized Pl\"ucker coordinates have a sequential interpretation: from image measurements to the estimation of \ddd components.}
\label{interpretation_seq}
\end{figure*}

\begin{figure*}[h!]
\psfrag{Z}{$h$}
\psfrag{u}{$\unit{u}$}
\psfrag{h}{$\hn$}
\psfrag{d}{$\l$}
\psfrag{D}{$\L$}
\psfrag{O}{$\Pnt{O}$}
\centerline{\resizebox{\textwidth}{!}{\includegraphics{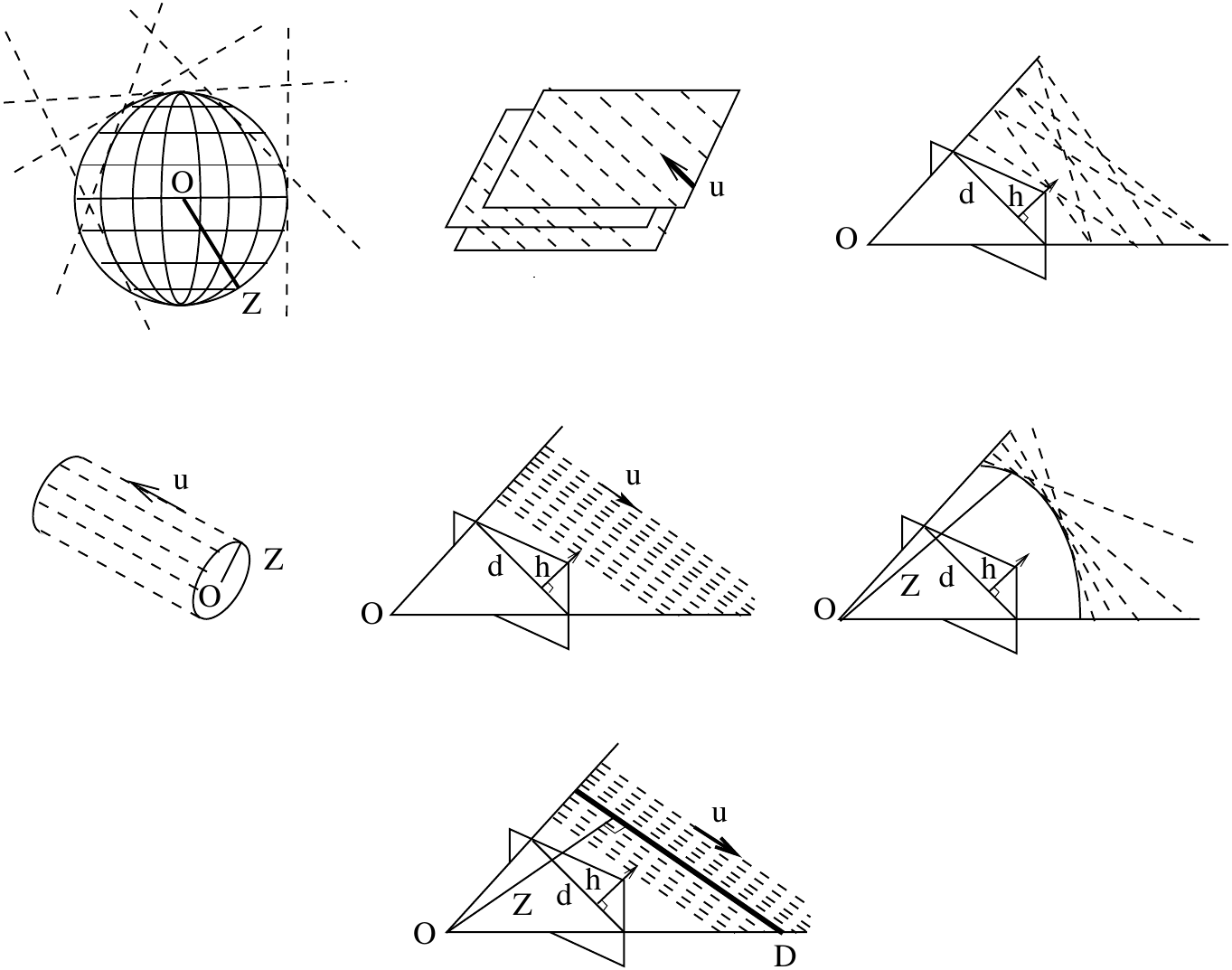}}}
\caption{Geometrical objects generated by lines with fixed components
of their \bpc}
\label{objets}
\end{figure*}

\subsubsection{Image alignment of a \ddd line} Among the previous cases, we are
interested in those where the image projection $\hn$ is given. Indeed, 
visual servoing consists in defining the goal position of a moving system by
an image, in our case, by the projection of a set of lines into the
image. On the opposite to the case where visual servoing is based on
points, here we do not have a simple answer to the question: ``What
are the configurations in \ddd space that give the same image~?''
As seen in Figure~\ref{objets}, there are four cases:
only $\hn$ is given; $\hn$ and $h$ are given; $\hn$ and $\un$ are
given; $\hn$, $\un$ and $h$ are given. The first case corresponds to
the requirement used by Chaumette 
{\em et al}\citep{chaumette90,espiau92a,motyl92a,debain96a}. The
second and fourth ones are not very interesting since they require the
estimation of the depth, always hard to accurately achieve.

On the opposite, the third case is highly
interesting. The \ddd line is constrained by its image projection and
its alignment on a given direction. Hence, we call this the {\em image
alignment} of a \ddd line. It corresponds to fixing the \bpc of the
\ddd line to a given value. As stated before, the only remaining
degree of freedom is depth. Hence, this brings us back to the case of
points, where the image of a point defines a 1-dimensional manifold in 
the \ddd space.

In addition to this elegant similarity with the case of points, the
image alignment matches the property of partial decoupling given
below, which allows the definition of a control law with global
convergence proof (see section~\ref{control}).

\subsubsection{Motion of a line vs. motion of the camera}

Here, we relate the instantaneous camera motion to the \ddd
line motion (expressed in our formalism) and recall the apparent
motion in the image of this line.

The instantaneous motion of a camera is defined by its velocity screw
$\tau=(V,\Omega)$, where $\Omega$ is the instantaneous
angular velocity and $V$ the instantaneous linear velocity
of a given point. They are usually expressed in the camera frame.

Let $\Drte{L}= (\hn{}^T,\unit{u}^T,h)^T$ be a \ddd line. Then its motion
is the vector $(\dot{\hn}{}^T,\dot{\unit{u}}{}^T,\dot{h})^T$. 
Rives {\em et al}~\citep{rives87b} recall the derivative
$(\dot{\unit{u}},\dot{\vect{h}})$ of the \npc $(\unit{u},\vect{h})$ of
a \ddd line~(\ref{du})--(\ref{dvecth}) 
\begin{align}
\label{du}
\dot{\unit{u}} &= \Omega \times \unit{u}\\
\label{dvecth}
\dot{\vect{h}} &= \Omega \times \vect{h} + V \times \unit{u}
\end{align}
The null space of this mapping is given by
\begin{align}
\label{ns1}
\Omega_{ns} &= \lambda \un\\
\label{ns2}
V_{ns} &= \mu \un + \lambda \vect{h}
\end{align}
where $\mu$ and $\lambda$ are real numbers.

From the definition of $h$, we have:
\begin{align}
\dot{h} &= \frac{d}{dt}(\sqrt{\vect{h}^T\vect{h}}) =
 \frac{1}{h}(\vect{h}^T \dot{\vect{h}}) = \hn^T\dot{\vect{h}}
\label{hnTh}
\end{align}
Then, inserting $\dot{\vect{h}}$ from (\ref{dvecth}) into (\ref{hnTh}) yields:
\begin{align}
\label{dh}
\dot{h}&=\hn^T (\Omega \times \vect{h} + V \times \unit{u})
= \Omega^T (\vect{h} \times \hn) + V^T (\unit{u} \times \hn)
\end{align}
Since $\vect{h} \times \hn=0$, we finally obtain:
\begin{equation}
\dot{h}=  V^T (\unit{u} \times \hn )
\end{equation}

Then, one can obtain $\dot{\hn}$, which Navab~\citep{navab} calls the {\em line
motion field equation}. Indeed, it can be 
interpreted as
the relation between the camera motion and the apparent motion of a \ddd line
in the image. It is obtained by:
\begin{equation}
\dot{\hn} = \frac{d}{dt}(\frac{\vect{h}}{h}) = 
\frac{1}{h}\dot{\vect{h}} - \frac{\vect{h}}{h^2} \dot{h}
\end{equation}
Then, insert $\dot{h}$ from
(\ref{dh}), $\dot{\vect{h}}$ from (\ref{dvecth}) and replace
$\vect{h}/h$ by $\hn$ whenever it is possible:  
\begin{equation}
\label{compare_hager}
\dot{\hn}=\frac{1}{h}(\id{3}- \hn \hn^T)
(\Omega \times \vect{h} + V \times \un)
\end{equation}
which simplifies into
\begin{equation}
\label{dhn}
\dot{\hn} = \Omega \times \hn - \frac{V^T \hn}{h}(\unit{u} \times \hn )
\end{equation}

%Let us recall the derivation of (\ref{dhn}) from
%(\ref{dvecth}). First, express $\dot{\hn}$ in function of $\dot{\vect{h}}$:
%\begin{align}
%\dot{\hn}&=\frac{d}{dt}(\frac{\vect{h}}{\sqrt{ \vect{h}^T\vect{h}}})\\
%&=\frac{ \sqrt{ \vect{h}^T\vect{h}}\ \dot{\vect{h}} - \vect{h}\ 
%\frac{d}{dt} (\sqrt{ \vect{h}^T\vect{h}}) }{\vect{h}^T\vect{h}}\\
%&=\frac{1}{\sqrt{ \vect{h}^T\vect{h}}}(\id{3}-\hn\hn^T)\dot{\vect{h}}
%\end{align}
%Then, insert $\dot{\vect{h}}$ from (\ref{dhn}) and replace $\sqrt{
%\vect{h}^T\vect{h}}$ by $h$ in the latter:
%\begin{equation}
%\label{compare_hager}
%\dot{\hn}=\frac{1}{h}(\id{3}-\hn\hn^T)(\Omega \times \vect{h} + V
%\times \un)
%\end{equation}
%which simplifies into:
%\begin{equation}
%\label{simplification}
%\dot{\hn}=\Omega \times \hn + \frac{1}{h}(\id{3}-\hn\hn^T)(V \times \un)
%\end{equation}
%since $\hn^T(\Omega \times \vect{h})=0$. Then, one obtains (\ref{dhn}) 
%using Proposition~\ref{bon}.

Finally, the Jacobian, or interaction matrix in the task function
approach\citep{samson90a,espiau92a}, associated to our
parameterization is the matrix in: 
\begin{equation}
\label{jac_hn}
\begin{pmatrix}
\dot{\hn}\\ \dot{\unit{u}}\\ \dot{h}
\end{pmatrix}
= \begin{pmatrix} -\frac{1}{h}(\unit{u} \times \hn) \hn{}^T &
-As(\hn) \\
 \zero{3}{3} &
-As(\unit{u})\\
(\unit{u} \times \hn)^T &
\zero{1}{3} \end{pmatrix} \begin{pmatrix} \Vlin\\ \Vrot \end{pmatrix}
\end{equation}
exhibiting a partial decoupling between rotation and translation.

\subsection{Discussion}

The parameterization we propose, based on \bpc, has some advantages,
from the control point of view, over the parameterizations found in
the litterature. Before we go on with the description of the drawbacks 
we found in other representations, let us recall these advantages:
\begin{itemize}
\item Our representation is continuous and covers the whole space,
except the case of lines going through the optical center.
\item It is coherent in the \ddd space and the image.
%\item It follows the sequence of image analysis treatments.
\item It is unique provided that some constraints, easy to satisfy, hold.
\item It exhibits a  partial decoupling between rotation and translation.
\end{itemize}

\subsubsection{In the 3D space}

Apart from the Pl\"ucker coordinates, one usually finds three other
representations of a \ddd line. All of them have their drawbacks.
\begin{enumerate}
\item A line is defined by a point and a direction.

The first drawback of this representation is that it requires to
select a point on the \ddd line. This is troublesome in the
visual guidance framework since there may not be any distinguishable
physical point on the line which is observed by the camera. Moreover,
it turns down to extracting a point in the image, which is what we
want to avoid by using lines.

Even though one could use points on the line, the choice of the point
is not unique. Consequently, there may be some discrepancy between the 
point chosen to define the desired position of the line and the point
chosen to define the current position of the same line. Even if the
same point were used, this would require some {\em matching} process.

Finally, this representation is deconnected from what can be directly
extracted from the image.

In fact, to make this representation point independent, or unique, or to
relate it to the image, is to come back to Pl\"ucker coordinates.
We remarked that Hager\citep{hager97a} represents lines with a point
and a direction, but comes to the Pl\"ucker coordinates as soon as
control is involved.

\item A line is the intersection of two planes.

The main drawback is that this representation is implicit since it
defines a line as the set of points verifying two plane equations:
\[
\left\{ \begin{matrix}
		a_1 x + b_1 y + c_1 z + d_1 = 0\\
		a_2 x + b_2 y + c_2 z + d_2 = 0
		\end{matrix}
		\right.
\]
Control is much easier with an explicit parameterization~!
Nevertheless, Chaumette\citep{chaumette90,espiau92a} used it, together
with the $(\rho,\theta)$ representation of an image line (see below),
but to the cost of motion equations with no specific structure. In particular,
rotation and translation are highly coupled.

\item The $(a,b,p,q)$ representation\citep{faugeras93a,navab93,plucker65}

This representation is a minimal variant of the previous one:
\[
 \left\{ \begin{matrix}
		x = a z + p\\
		y = b z + q
		\end{matrix}
		\right.
\]
It represents the lines that are neither parallel to the optical axis, 
nor parallel to the image plane. To take into account all the lines in 
the \ddd space, one has to make permutations over $x$, $y$ and
$z$. One thus obtains three overlapping but nevertheless discontinuous 
representations that gather all the lines in space. However, the
discontinuity of the representation would require switches in the
control law, that may have destabilizing effects or, at least lead to unacceptable transient behaviors.
\end{enumerate}

\subsubsection{In the image}

Consider a \ddd line ($\L$) with \bpc $(\hn^T,\un^T,h)$ and projecting
into the image as the line ($\l$). Recall that we represent ($\l$) by
the unit vector $\hn$. This corresponds to the normalization of the
3-vector containing the line equation coefficients ($a,b,c$) in the
\ddd space.

We give now the other usual representations of a line
and show why they are not
as well suited to the visual servoing problem as the one we chose.
\begin{enumerate}
\item The unnormalized triple $(a,b,c)$ of its equation coefficients.

Let us discard it immediately since it is not uniquely defined for a
given line, which is hard to handle from the control point of view.

\item The normalized triple $\vect{d}=(a,b,c)=(\cos \theta, \sin \theta, \rho)$.

This parameterization is used in visual servoing by Hager\citep{hager97a}.
It corresponds to the normalization of the line equation coefficients
on a subspace of dimension 2, i.e.
\begin{equation}
\vect{d}^T 
\matx{N}
\vect{d} = 1
\quad with \quad
\matx{N}=\begin{pmatrix}
1&0&0\\
0&1&0\\
0&0&0
\end{pmatrix}
\end{equation}
Hence, 
\begin{equation}
\vect{d}=\frac{\vect{h}}{\sqrt{\vect{h}^T\matx{N}\vect{h}}}
\end{equation}

%\begin{corrige}{A supprimer/developper ?}
%It is noticeable that this normalization operator is only
%semi-definite positive, while ours is definite positive.
%\end{corrige}

The first drawback of this parameterization is that $\vect{d}$ is not
a unit vector in the \ddd space since its norm equals
\mbox{$\sqrt{1+\rho^2}$}. Consequently, 
one does not have the opportunity to later build 
upon it an orthonormal basis as we did (Remark~\ref{bon}).
This results in an heterogeneity between image and space
parameterizations, especially as far as metric information is concerned.
Indeed, if one can, in some way, obtain the orthogonal distance $z$ of
the line to the origin, then, one has to perform an ``unnormalization''
in order to reconstruct the point of the line $\Pnt{H}$ which is
the closest to the origin:
\[
\Pnt{H}=\frac{z}{\|\vect{d}\|}\ \un \times \vect{d}
\]
 Moreover, the unnormalization factor is not constant but
depends on the projection of the line onto the image.
On the opposite, with our parameterization, we can immediately,
and independently from the image information,  obtain
$\Pnt{H}$ since, from Remark~\ref{h=|H|},
\[
\Pnt{H}=z\ \un\times\hn
\]

A second drawback is that the consequent motion line equation is more
complex than ours. 
Indeed, rewriting the Jacobian associated with this parameterization
with our notation, one obtains the following motion line equation
instead of~(\ref{compare_hager}):
\[
\dot{\vect{d}} = \frac{1}{d}(\id{3}-\vect{d}\vect{d}^T\matx{N})
(\Omega \times \vect{h} + V \times \un)
\]
where $d=\sqrt{\vect{h}^T\matx{N}\vect{h}}$. Thus, the presence of
$\matx{N}$ does not allow such a simplification of this expression
as from (\ref{compare_hager}) to (\ref{dhn}).
In particular, the term $\vect{d}^T\matx{N}(\Omega \times \vect{h})$
does not vanish, preventing from expressing the rotation effects
in the image as a simple cross-product.

\item The minimal representation $(\rho,\theta)$

This representation is the other representation used in visual
servoing\citep{chaumette90,espiau92a,motyl92a,debain96a}. It has the
advantage of being minimal. However, it is periodic. Thus, the angle
$\theta$ is defined either in $(-\pi,\pi]$ or in $[0,2\pi)$. 
This is troublesome in two ways.

Firstly, the choice of the definition interval of $\theta$ is not
generic. It depends indeed of  initial ($\theta_0$) and
desired ($\theta^*$) values
 of $\theta$. Take two examples
(Figure~\ref{theta}). When $\theta_0>0$ and
$\theta^*<0$ are close to 0, it is clear that one should chose $\theta
\in  (-\pi,\pi]$, otherwise it would travel almost the whole unit
circle. On the opposite, when $\theta_0$ and $\theta^*$ are close
to $\pi$, the choice is trivially $\theta \in [0,2\pi)$.

\begin{figure*}[t]
\psfrag{t0}{$\theta_0$}
\psfrag{t*}{$\theta^*$}
\centerline{\resizebox{!}{0.4\textwidth}{\includegraphics{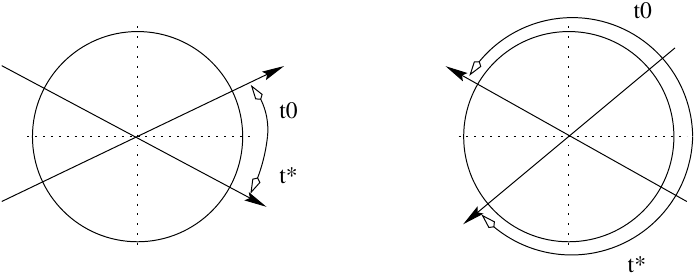}}}
\caption{Two different choices for the definition interval of $\theta$}
\label{theta}
\end{figure*}

This {\em ad hoc} choice does not prevent from the second drawback of
this representation: it restricts the variations of $\theta$.
Indeed, if the line in the image rotates too much, then $\theta$ may
reach one of its bounds or, even worse, go beyond it. Then, it should
have to switch to the other bound. This, in turn, would generate a
discontinuity in the control law. 

\end{enumerate}

\section{Control}
\label{control}
\subsection{Introduction}

Let us consider a fixed reference frame $F_r$ and a moving object with associated frame $F_0$ (here the 
camera frame). We
define a goal frame $F^*$ in $F_r$ and the control problem we consider is to drive $F_0$ from its initial value
$F_0(0)$ to $F^*$ in a stable way from image information.  If the controlled variables are a parametrization of
the euclidean transformation (pose) between $F_0(t)$ and $F^*$, the control is called {\em 3D visual servoing}.
When using a rigid set of $n$ 3D lines $(\L_i)$, this requires to use a dedicated  pose computation method,
like in \citep{dhome89a,chen91b}. On the contrary, when using only an image-based parametrization
of the projected lines, as done in \citep{chaumette90}, we fall in the category of 2D visual servoing.
The first approach allows to accurately control the trajectories in space, making for example easy the
avoidance of obstacles. However, this may lead to loosing the features in the image. Moreover, the
analysis of the control performances is uneasy because of the insertion of the pose
computation algorithm in the loop. The pure 2D control scheme has of course opposite
characteristics and may further exhibit singularities.  In order to overcome these drawbacks, the method we propose is a combination 
of the two approaches, which could be called 2D 1/2 visual line servoing. Indeed, for every line, the representation we use
allows to extract directly 2  parameters from the image, i.e. $\hn$, the normal to the interpretation plane. The
 remaining 2 parameters may be considered as 3D ones: the depth $h$ and the last component of $\un$. We will see
later how to take them into account. Finally, our goal will therefore be defined as the set
$\{ \un_i^*, \hn_i^*, h_i^* ~~;~~ i=1\ldots n \}$.

We will firstly demonstrate the principle of the control scheme we propose by using a single line. We will then consider the more general case of the $n$ lines
as just defined, and finally focus on the particular situation of the orthogonal trihedron. Before,
let us set the assumptions which will be used in all the following.
\begin{itemize}
\item The set of goal lines $\L_i^*$ corresponds to a  fixed achievable configuration of the system (i.e. the
displacement between the current position and the goal one is a rigid motion).
\item All 3D trajectories of the camera which are needed are feasible (no obstacles, no joint limits
neither geometrical singularities are reached by the device which moves the camera).
\item During all the motion, no line $(\L_i)$ goes through the optical center of the camera.
\item The camera is fully calibrated.
\item All the needed variables but the depth are exactly known or measured.
\item The velocity screw $(\Omega, V)$ of the camera is the control variable (i.e. dynamical
effects are neglected).
\end{itemize}

\subsection{The Example of a Single Line}

 Let us consider a single target line specified by the set $\{ \un^*, \hn^*, h^*\}$ and forget for the moment the 
depth $h$. The state equations of this subsystem are given by
eqs~(\ref{du})~and~(\ref{dhn}) only. We may remark that the   
orientation part is independent from  the translation part, which
allows to consider that this system is in
cascaded form. Since the manifold of lines does not exhibit natural metric we have to define convenient
errors. By exploiting the above mentioned partial decoupling, we set
\begin{equation}
\label{eu}
\bf{e}_u= \un \times \un^*
\end{equation}
and
\begin{equation}
\label{eh}
\bf{e}_h= \un \times (\hn -\hn^*)
\end{equation}
We can therefore set the following cascaded control:
\begin{equation}
\label{cde1droite_np}
\left\{ \begin{array}{l}
\Omega= \mu_u \bf{e}_u \\
V= -\mu_h \bf{e}_h |_{\Omega =0; \un=\un^*}
\end{array}
\right.
\end{equation}
 where $\mu_u > 0$, $\mu_h >0$.
Remarks:
\begin{enumerate}
\item $\bf{e}_u$ is the classical error on the sphere; it allows to move on a geodesics.
\item $\bf{e}_h$ is a better choice than simply $(\hn -\hn^*)$ since it can be shown that it ensures
that $\hn \times \hn^*$ is minimum even if $\Omega$ vanishes without the right
equilibrium reached ($\un_{eq} \neq \un^*$) (see
\citep{TheseNiko}). Moreover, since $\hn$ is a unit vector, it is
constrained to move on a sphere. Thus, $\bf{e}_h$ allows to move
on this sphere using translational degrees of freedom only
(figure~\ref{illusV}).
\item The evolution equation of (\ref{dhn}) under the control (\ref{cde1droite_np}) is:
\begin{equation}
\dot{\hn}=  \frac{\mu_h }{h} {\bf{e}_h}^T \hn (\un \times \hn)
\end{equation}
This shows that the unknown depth\footnote{Recall that $h$ can be
considered as a depth \S~\ref{bpc}.}, $h$, acts only as a
variable unknown positive gain, which has no 
consequence on the stability in continuous time.

\item Since we don't consider $h$, the null space of the system is 3-dimensional. Instead of (\ref{ns1}),
(\ref{ns2}), it is given by
\begin{align}
\label{ns3}
\Omega_{ns} &= \lambda \un\\
\label{ns4}
V_{ns} &= \mu \un + \lambda h \hn + \nu (\un \times \hn)
\end{align}
where $\mu$, $\lambda$ and $\nu$ are real numbers. Any motion within this subspace will not affect 
the stabilization.
\end{enumerate}

\begin{figure*}[t]
\psfrag{h}{$\hn$}
\psfrag{h*}{$\hn^*$}
\psfrag{dh}{$\dot{\hn}$}
\psfrag{V}{$\Vlin=-\mu_h \vect{e}_h$}
\psfrag{u}{$\unit{u}$}
\psfrag{P}{$P$}
\psfrag{e}{$\vect{e}_h=\unit{u} \times (\hn - \hn^*)$}
\centerline{\resizebox{!}{0.4\textwidth}{\includegraphics{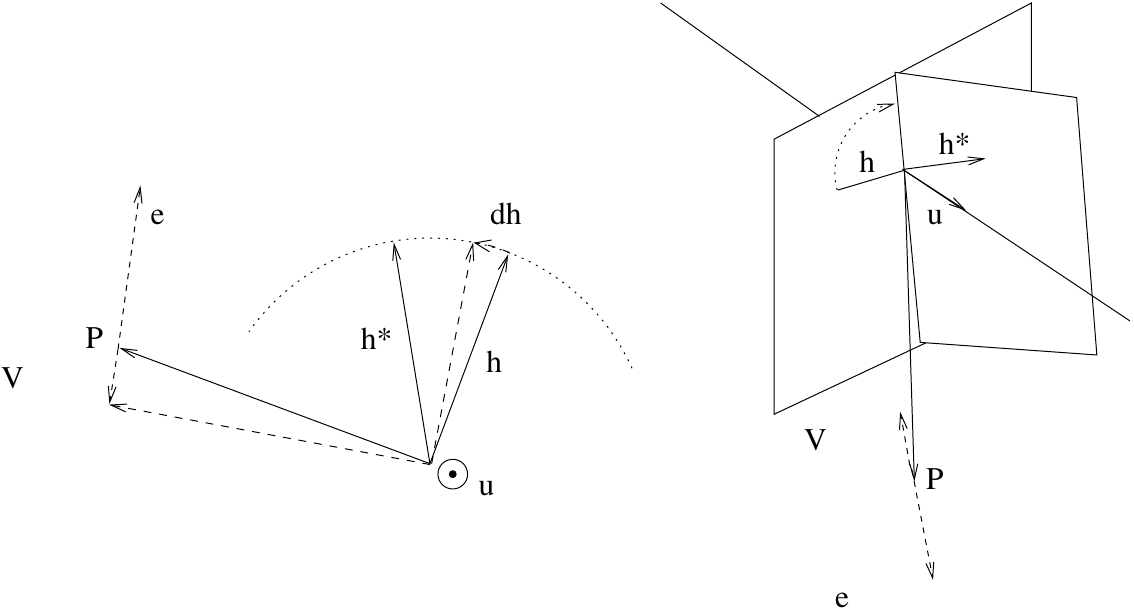}}}
\caption{\label{illusV} The action of the translation control 
$\Vlin=- \mu_h \unit{u} \times (\hn - \hn^*)$~: in the case where
$\unit{u} = \unit{u}^*$ (left) and in the general case (right).}
\end{figure*}

Considering the latter remark, we notice that the translation
control in~(\ref{cde1droite_np}) may have components in the two dimensions of the
null space ($\mu$,$\nu$) that are independent from the rotation
control. These motions are useless  disturbances unless they are used
for complementary tasks (see section \ref{laser}).
It is therefore  pertinent to project the linear velocity on the only relevant component, i.e. $\hn$
when $\Omega =0$. 
Hence, we obtain the final cascaded control for one line:
\begin{equation}
\label{cde1droite}
\left\{ \begin{array}{l}
\Omega= \mu_u \bf{e}_u \\
V= -\mu_h \epsilon_h \hn |_{\Omega =0; \un=\un^*}
\end{array}
\right.
\end{equation}
 where $\epsilon_h = (\un \times \hn)^T\hn^*$, $\mu_u > 0$, $\mu_h >0$.
We have the following result:
{
\proposition 
The control (\ref{cde1droite}) is such that:\\
1- if $\un(t=0) \neq -\un^*$, then  $\un(t)$ asymptotically converges to $\un^*$;\\
2- if  $\hn(t =0|_{\Omega =0; \un=\un^*}) \neq -\hn^*$, then $\hn(t)$ asymptotically converges to~$\hn^*$.
}

The proof is easy (see \citep{TheseNiko}) using
$\frac{1}{2} || \un-\un^*||^2$  and  
$\frac{1}{2} || \hn-\hn^*||^2$ as Lyapunov   
functions respectively.

In practice, the stabilization in orientation can be considered as
``sufficiently'' achieved after a finite time $T_{\Omega}$. The
translation control then writes~as 
\begin{equation}
V= -\mu_h s(t-T_{\Omega}) \epsilon_h \hn
\end{equation}
where $s(t)$ is the Heaviside step function, and the convergence condition becomes
 $\hn(t=T_{\Omega}) \neq -\hn^*$.
Further remarks:
\begin{enumerate}
\item In practice, the two control vectors may not be fully cascaded: the control in orientation should be
activated all the time, because of unmodelled coupling effects between translational and rotational motions;
the control in translation can be progressively activated using a transition function smoother than the step one,
in order to avoid unexpected large motions. This is the reason why we wrote $\un$
instead of $\un^*$ in the proposition.

\item The forbidden initial conditions in the proposition are unstable equilibriums. They are easy to avoid.
\end{enumerate}

\subsection{General Case}
We now consider the case of a set of $n >1$ distinct lines, rigidly linked, and for the moment we
do not assume further properties on their configuration. Let us recall that the equations of their motion are, for $i=1 \ldots n$:

\begin{align}
\label{dui}
\dot{\unit{u}_i} &= \Omega \times \unit{u}_i\\
\label{dhni}
\dot{\hn_i} &= \Omega \times \hn_i - \frac{V^T \hn_i}{h_i}(\unit{u}_i \times \hn_i)
\end{align}
Again, we left the $h_i$'s, the dynamics of which are given by eq (\ref{dhn}), free to evolve. 
 Like previously, we define an ideal
cascaded control, allowing us therefore to split it in two parts: the rotational, then  the
translational, provided that the former is stabilized. 
\subsubsection{ Control in Rotation}
It is a straightforward generalization of the single line case which results in 
\newtheorem{theorem}{Theorem}
{\theorem

\label{theoreme_rotation}
 (proof in appendix~\ref{preuve_theo_rot})\\
If $\un_i (t=0) \neq -\un_i^*, ~\forall i=1 \ldots n$,

Then,  the control
\begin{equation}
\label{rotation_control}
\Omega = \mu_u \sum_{i=1}^n \un_i \times \un_i^*, ~~~\mu_u >0
\end{equation}
 asymptotically stabilizes the equilibrium of the system (\ref{dui}), $i=1 \ldots n$:
\begin{equation}
\un_i=\un_i^*, ~~~ i=1 \ldots n
\end{equation}
}
\subsubsection{Control in Translation}
The translational control presented for a single line can be
generalized in a similar way. The following result holds:

{\theorem
\label{theoreme_translation}
 (proof in appendix~\ref{preuve_theo_trans})\\
If for all $i=1..n$, the vectors $\unit{u}_i$ are constant (but not
necessarily equal to the vectors $\unit{u}_i^*$),

Then:
\begin{enumerate}
\item the control 
\begin{equation}
\label{cde_transl}
V=-\mu_h \sum_{i=1}^n \matx{B}_i\epsilon_i \hn_i~ |_{\Omega =0; \un_i=\un_i^*\forall i},~~~\mu_h >0
\end{equation}
$\matx{B}_i$  being $(3 \times 3)$ weighting matrices.
stabilizes equilibrium points under the condition:
\begin{equation}
\left( \sum_{i=1}^n \matx{B}_i \epsilon_i \hn_i
\right)^T
\left(  \sum_{i=1}^n \frac{ \epsilon_i  \hn_i }{h_i}\right) > 0
\end{equation}
\item
 the  choice $\matx{B}_i=\frac{1}{h_i}\id{3}, \forall
i=1..n$, which  corresponds to the control:
\begin{equation}
\label{corrolary_control}
V=-\mu_h \sum_{i=1}^n \frac{1}{h_i}\epsilon_i \hn_i~ |_{\Omega =0; \un_i=\un_i^*\forall i},~~~\mu_h >0,
\end{equation}
stabilizes the stationnary points:

\begin{equation}
\sum_{i=1}^n \frac{ \epsilon_i \hn_i }{h_i} = 0
\end{equation}
whatever the line configuration.
\end{enumerate}
}

However, the analytical expression of the stationnary points cannot have an easy
 geometrical interpretation in any  line
configuration. Additionnal knowledge on the system geometry is therefore required.
We will see in the following that the choice of a particular object, the
orthogonal trihedron, will allow us to obtain  results stronger than in the general case.

\subsection{Case of the Orthogonal Trihedron}
\label{triedre}

The trihedral configuration of lines is of particular interest since it is very often
present in structured environments like in buildings or in manufactured systems. However, from the image 
point of view, it has been shown (see \citep{dhome89a}) that any  pencil of lines (i.e. with
a common intersection) is degenerated for pose computation. Indeed, for example, it is intuitive
that observing an ideal infinite trihedron with a single camera doesn't allow to recover its depth:
imagine that the projected trihedron is centered in the image; then any
translation along the optical axis leaves the projection unchanged. 
In the next section, we will propose a practical solution to this problem. For the moment, let us
study
the control without taking the depth into account. The final equilibrium will therefore
have a single degree of freedom left.

With respect to the general case, the following constraints have to be considered for the
orthogonal
trihedron:
\begin{itemize}
\item $n=3$
\item $\un_i^T \un_j =0 ~\forall~i \neq j$ (orthogonality)
\item $\vect{h}_i^T\un_j +\vect{h}_j^T\un_i =0 ~\forall~i, j$ (intersection)
\end{itemize}

Taking these constraints into account allows to set the following results 
\begin{itemize}
\item Rotation: theorem 1 is still true.
\item Translation:
%, standard case:
%{\theorem (proof in appendix)\\
%\label{theoreme_triedre}
%If $\hn_i (t=0) |_{\Omega =0; \un_i=\un_
%i^*~i=1,2,3} \neq -\hn_i^*,~ ~i=1,2,3$
%
%Then the system (\ref{dhni}, $i=1,2,3$), with the control
%\begin{equation}
%V=-\mu_h \sum_{i=1}^3  \un_i \times (\hn_i-\hn_i^*)~ |_{\Omega =0; \un_i=\un_
%i^*~i=1,2,3}, ~~~\mu_h >0
%\end{equation}
%is aymptotically stabilized to its equilibrium
%\begin{equation}
%\hn_i=\hn_i^*,~ ~i=1,2,3
%\end{equation}
%}
%\item Translation, projected case:
%The control writes in that case as:
%\begin{equation}
%\label{Vprojt}
%V=-\mu_h \sum_{i=1}^3 ((\un_i \times \hn_i)^T \hn_i^*)\hn_i~ |_{\Omega =0; \un_i=\un_
%i^*~i=1,2,3}, ~~~\mu_h >0
%\end{equation}
%
It can be shown (cf \citep{TheseNiko}) that the stationnary points of
the control~(\ref{corrolary_control}) are the configurations where
 a plane of the trihedron intersects  the optical center of the
camera. As a consequence, the control is asymptotically stable
provided that the trajectory of the camera stays in a
single octant.
\end{itemize}

However, it can be shown that the control~(\ref{corrolary_control})
is inversely proportionnal to the depth of the trihedron center. Since this
depth can not be observed, we can set it to an arbitrary value (for instance,
the desired depth after convergence). But, in the particular case of the
orthogonal trihedron a simpler control law can be used:

{\theorem
\label{control_trihedron}
In the case of the orthogonal trihedron and
if $\hn_i(t=0)\neq -\hn_i^*$, then  the following control
\begin{equation}
\label{simple_control}
V=-\mu_h \sum_{i=1}^n \epsilon_i \hn_i~ |_{\Omega =0; \un_i=\un_i^*\forall i},~~~\mu_h >0
\end{equation}
is independent from the depth and asymptotically stabilizes the 
equilibrium of the system~(\ref{dhni})
}\\
The proof of this theorem is given in Appendix~\ref{control_trihedron_proof}.

%this control exhibits  stability properties
%  similar to the previous ones, provided that the trajectory of the camera stays in a
%single octant. In fact, a configuration where a plane of the trihedron intersects
% the optical center
%of the camera corresponds to a singularity of the system with the control (\ref{Vprojt}).

Let us mention that, like in the case  of a single line, the practical implementation of
these control schemes can be achieved using some overlapping of the rotational and translational
parts. Simulations  of section \ref{results} will provide some results concerning this matter.

\section{Practical Achievements}
\label{results}
\subsection{Background: a Welding Application}
This work was performed in  the framework of a European project, VIGOR \citep{vigor}. The main
industrial partner was the Odense Steel Shipyard in Denmark and a major concern 
of the project was to improve  the
automatic welding of ship parts. In particular, it was required to
accurately move a welding tool with respect to a ship part, the position of
which is not exactly known.

The state-of-the-practice combines off-line trajectory generation
based on CAD models and on-line following of the generated
trajectory. This requires that the CAD models are perfect and that
there is no discrepancy between the position of the part with respect
to the robot base at trajectory generation time and at trajectory
following time. This is a strong constraint in the case of ship welding
since the size and the weight of the parts to be welded prevent from
achieving an accurate positioning of these parts.
Consequently, it was necessary to use sensor-based techniques, such as 
visual servoing.

Since 75\% of the parts to be welded are composed of three
intersecting orthogonal planes (see Figure~\ref{photo_triedre}), we
put the emphasis of the application of  our studies to   this junction of three
orthogonal lines. 

With respect to the theoretical control schemes previously presented, the following problems had to
be solved in order to achieve the implementation:
\begin{enumerate}
\item how to extract and track the lines in the image in a fast and robust way?
\item how to compute the orientations $\un_i$?
\item how to cope with the missing depth?
\end{enumerate}

We briefly present in the following the solutions we choose for all these issues, before
giving some simulation and experimental results.

\subsection{Adding Depth Control}
\label{laser}
Dhome {\em et al.}\citep{dhome89a} showed that a line junction is a
degenerated case of pose estimation in the sense that it is impossible 
to estimate the distance from the camera center point to the 
{\em junction center} (i.e. the
intersection of the \ddd lines) from a single image. Thus, one can only
estimate a {\em partial pose} of the camera: the \ddd orientation of
the lines and the line of sight passing through the junction center.

From the control point
of view, this degeneracy means that the distance from the robot to the
ship part cannot be measured neither controlled. Consequently, we need to add some other sensor to the
camera in order to observe the  depth of the junction center.

We rejected the use of stereovision for several reasons. Firstly, one
can verify that the line junction configuration requires the stereo
pair to be calibrated, which we would like to avoid, in order to
estimate the line junction depth. Secondly, to
achieve a good accuracy in this depth estimation, one should have an
accurate calibration {\em and} a wide baseline, which increases the
bulk of the robot end-effector. Finally, even though the depth were
not to be estimated but instead the two images were to be used to
implicitly define a depth, we would fall back into the troubles of
stereo visual servoing\citep{lamiroy00}.

The solution we chose was to use a commercial laser pointer (cheap and
simple) that we rigidly linked 
to the camera (figure~\ref{camera+laser}). In this configuration the image of the laser spot
moves on a given straight line whatever the observed scene and the
motion of the set \{camera-laser\}. 
This can be easily understood  if we consider the laser beam as the optical axis of a virtual
camera. The line in the image is therefore an epipolar line of this particular stereo rig. 

\begin{figure*}[t]
\centerline{
\hfill
\resizebox{!}{0.4\textwidth}{\rotatebox{90}{\includegraphics{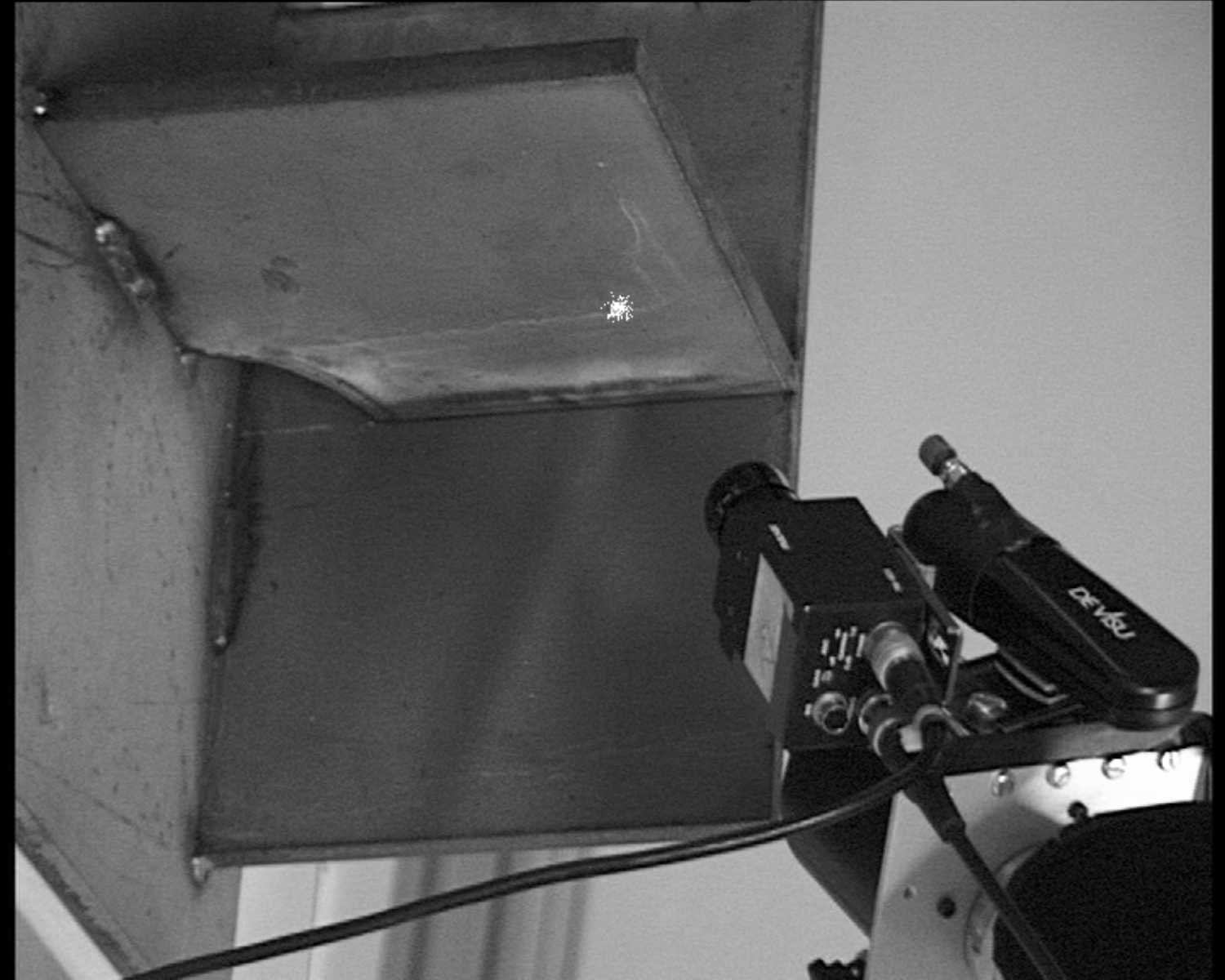}}}
\hfill
\resizebox{!}{5cm}{\includegraphics{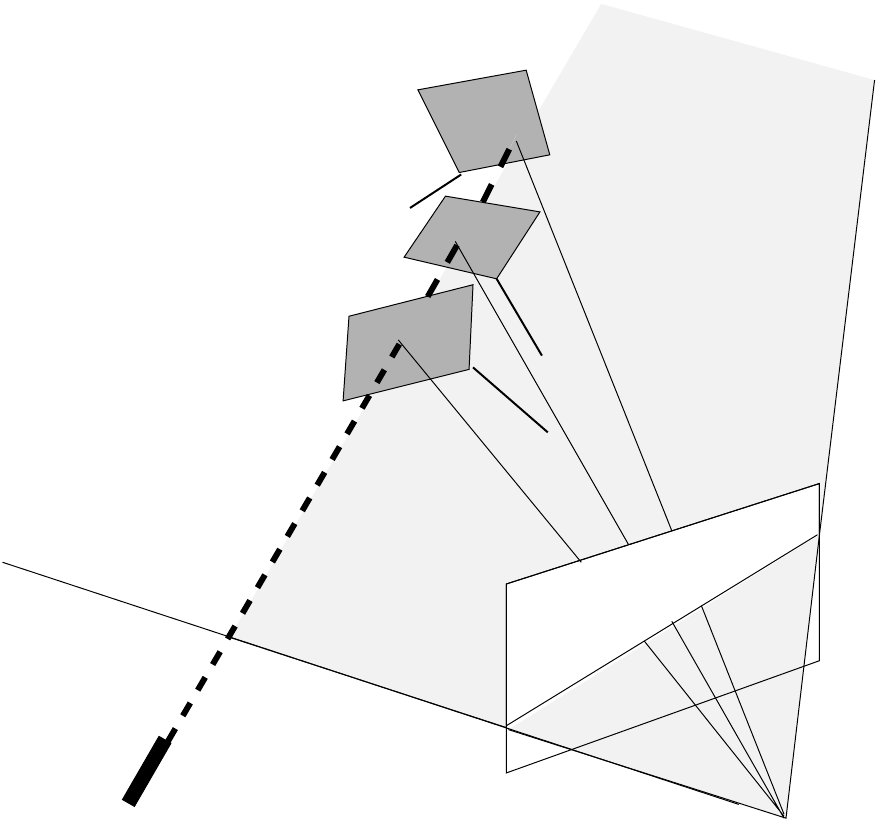}}
\hfill
}
\caption{A laser pointer is rigidly linked to the camera}
\label{camera+laser}
\end{figure*}

{\remark

If we calibrate the laser-camera system, we can recover the depth by triangulation. In order
to avoid  this step and for improving robustness, we will {\rm not} consider that the laser-camera
system
is calibrated. We will only need to know the laser epipolar line, which is straightforward to obtain
by recording the image spot position while moving, whatever the target.}

Let us now address the problem of depth control. In fact, we have up to now one dimension free.
We therefore need  to add to the control scheme one variable only, which should be
an image of the depth. Since the image laser spot moves on a line, it looks natural to choose
the signed distance of the spot from a given origin of the line, denoted as $s$, as the
variable to control. When the target is a plane, $s$ is a monotonic function of the true depth. In order to design the control, we now need to know the variation of $s$,
i.e its interaction screw.
We have the following results:
{\proposition
\label{sdot}
If the laser spot moves in a single 3D plane, then the variation of $s$ is:
\begin{align}
&\dot{s}=  -\frac{h_L \cos \theta_z}{z_L^2} \\
& \times
\begin{pmatrix}
\tan\alpha_x & \tan\alpha_y & 1 -\frac{z_L}{\cos\theta_z} & \tan\alpha_y & \frac{z_L}{\cos\theta_z} & \tan \alpha_x 0 
\end{pmatrix} \nonumber \\
& \times \Theta_L \tau
\nonumber
\end{align}
where $\tau$ is the velocity screw $ ( V,\Omega )$, $\Theta_L$ is the adjoint operator
involving the rigid transformation $R_L, t_L$ between the camera frame
and the laser frame, $z_L$ is the depth of the laser spot along its view
line, $\alpha_x$ and $\alpha_y$ are two angles defining the normal to
the 3D plane in the laser frame, $h_L$ is the distance between the
camera center of projection to the intersection between the laser beam
and the image plane, and $\theta_z$ is the angle between the laser
beam and the optical axis.
}

Notice that this equation can be derived from eq.~(7.1.48) in
\cite[pp~273--274]{samson91}, obtained while defining the
interaction screw of general thin-field
range-sensors. An alternative proof of those results is given
in~\citep{TheseNiko}, with explicit references to the present case. 

{\proposition 

In addition, if the rotational velocity is zero, and if the translational velocity is
such that $V_l=v \un_l$ where $\un_l$ is a constant unit vector, then
\begin{equation}
\label{sdot2}
\dot{s}=\frac{v}{z^2} \left (-h_L k_z (\tan \alpha_x ~~ \tan \alpha_y ~~ 1~~) R_l \un_l \right)
\end{equation}
where the term in brackets is a constant, noted $k$.}

From this result, it is now straightforward to design the depth control.  In order to take
benefit of the last proposition, the depth control will be also cascaded with the rotational
one. Let us therefore assume that the desired orientation has been reached. Then, $\Omega =0$,
even though it is highly advisable to keep this control activated in order to ensure that the
orientation remains constant all the time despite disturbances. Let us choose as $\un_l$ the direction
of the view line going through the trihedron center. An image of the depth error is
$e_s = s- s^*$ where $s^*$ is the desired image spot position on the laser epipolar line, and with, from (\ref{sdot2}), $\dot{e}_s = kv/z^2$. An ideal exponential
behavior would be reached by choosing $v$ as 
\begin{equation}
v = -\lambda \frac{z^2}{k} (s-s^*),~~~\lambda >0
\end{equation}

Since $z$ is unknown, but is always nonzero, it is possible to fix its value to an
arbitrary one. Stability will not be affected, but the convergence will no more be exponential.
The depth control part is finally:
\begin{equation}
V_l= -\frac{\mu_l}{k} (s-s^*) \un_l,~~~\mu_l >0
\end{equation}

It remains only now to integrate this control in the translational
one.  Recall that we had the following control:
\begin{equation}
V_h=-\mu_h \sum_{i=1}^3 ((\un_i \times \hn_i)^T \hn_i^*)\hn_i, ~~~\mu_h >0
\end{equation}
The final translational velocity control is simply $V= V_h +V_l$. It can indeed
be  seen that the control $V_h$ is a combination of the
$\hn_i$'s, which are all orthogonal to $\un_l$ owing to the trihedron
geometry (intersection). Therefore $V_l$
is in the null space of $V_h$ and can run simultaneously with it.

\subsection{Line Extraction and Tracking}

To extract the lines and track them in the image along the
motion, we used an adapted version of the \dd-\ddd model-based
approach proposed in~\citep{marchand98a} and which is summarized by
their authors as: 
``{\em in a first step, the object image motion is represented by a 2D
affine motion model, and is estimated, using a robust statistical
method, from the computation of the normal displacements evaluated
along the projected model contours. [...] The 2D affine motion model
does not always match the real displacement of the object. A second
step that consists in fitting the projection of the object model on
the intensity gradients in the image is necessary. This is achieved
using an iterative minimization of a non-linear energy function with
respect to 3D pose parameters.}''

\subsection{Computation of the Line Orientations}

To compute the line orientations, we based ourselves on the analytic
solution to the perspective 4 point problem proposed
in~\citep{horaud89b}. 
Indeed, the orthogonal trihedron is a particular case of the 4 point
problem where 3 of the points lie at the infinite.

Thus, using $\un_l$ the unit vector
of the view line going through the trihedron center, we define
$\unit{v}_i=\un_l \times \hn_i$. Recall that the $\hn_i$'s can be
expressed as:
\begin{equation}
\hn_i = \frac{\un_l \times \un_i}{\|\un_l \times \un_i\|}, \quad i=1..3
\end{equation}
Consequently, if we give the same orientation to the \ddd lines and to their projections,
we obtain the following system:
\[
\unit{u}_i = \cos \theta_i \un_l + \sin \theta_i \unit{v}_i,\quad 
i=1..3, \quad \sin \theta_i > 0
\]
where the unknowns are the angles $\theta_i, i=1..3$. This system is
similar to the one obtained in~\citep{horaud89b} and its solution is
of the form:
\begin{eqnarray*}
\cos \theta_i &=& \epsilon_i \sqrt{\frac{\unit{v}_{i-1}^T\unit{v}_{i+1}}
{\unit{v}_{i-1}^T\unit{v}_{i}\unit{v}_{i}^T\unit{v}_{i+1}+\unit{v}_{i-1}^T\unit{v}_{i+1}}}\\
\sin \theta_i &=& \sqrt{\frac{\unit{v}_{i-1}^T\unit{v}_{i}\unit{v}_{i}^T\unit{v}_{i+1}}
{\unit{v}_{i-1}^T\unit{v}_{i}\unit{v}_{i}^T\unit{v}_{i+1}+\unit{v}_{i-1}^T\unit{v}_{i+1}}}
\end{eqnarray*}
where $\epsilon_i=\pm 1$ depending on whether the visible half of the
\ddd line points inwards or outwards. In the case we deal with, where
the trihedron faces are opaque, this sign is identical for all
the 3 lines.
Notice finally that to simplify the notation, the operations $i+1$ and
$i-1$ represent the addition and substraction modulo 3.

\subsection{Simulation Results}

We present here some simulation results to show the behaviour of the
control law we proposed with both the use of lines and the depth
control. We compare the sequential activation of the control in
rotation and then in translation
(Figures~\ref{simu:cascade:rot+trans},~\ref{simu:cascade:erreurs}~and~\ref{simu:cascade:traj})
to their simultaneous activation (Figure~\ref{simu:simult:rot+trans},~\ref{simu:simult:erreurs}~and~\ref{simu:simult:traj}).
In both cases, depth control is always activated after the desired
partial pose is obtained.

% Images cascade
\begin{figure*}[tb] 
\centerline{
  \resizebox{!}{5cm}{
   \includegraphics{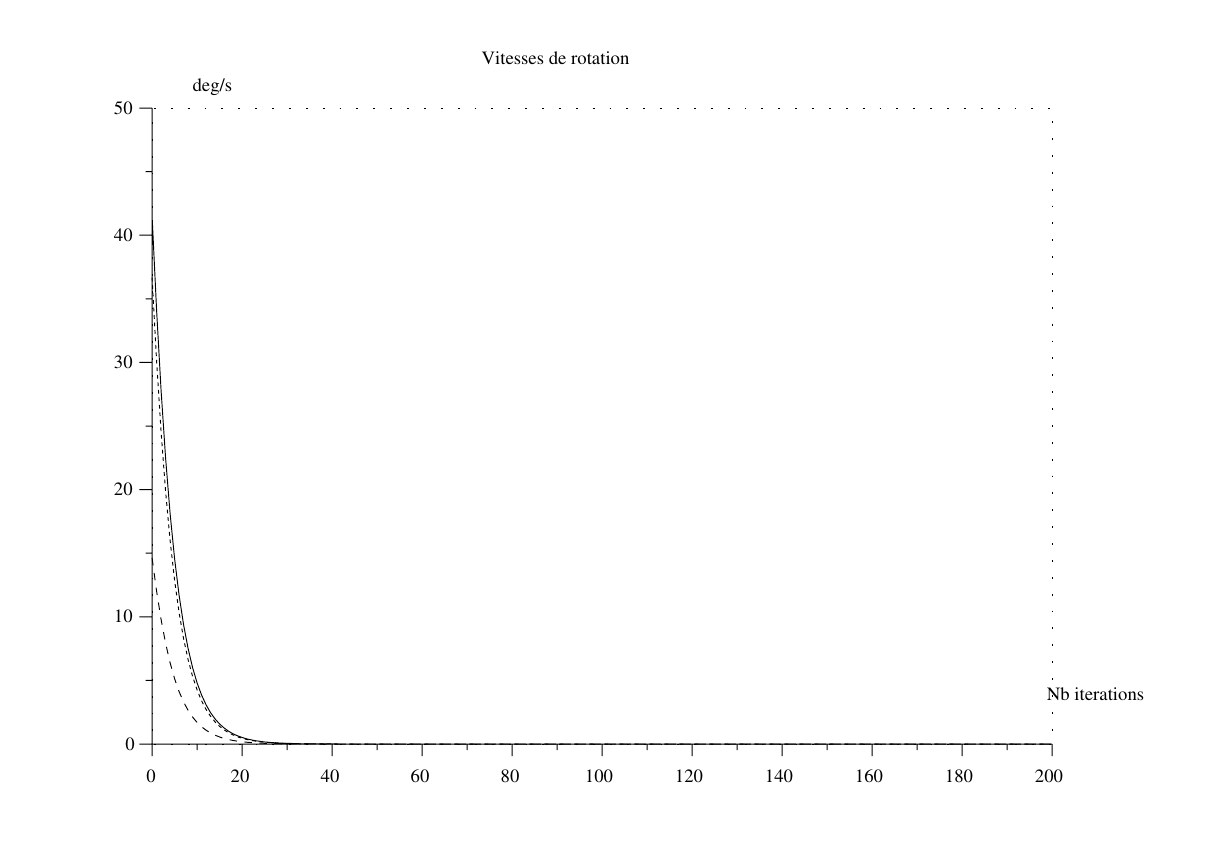}
  }
  \resizebox{!}{5cm}{
   \includegraphics{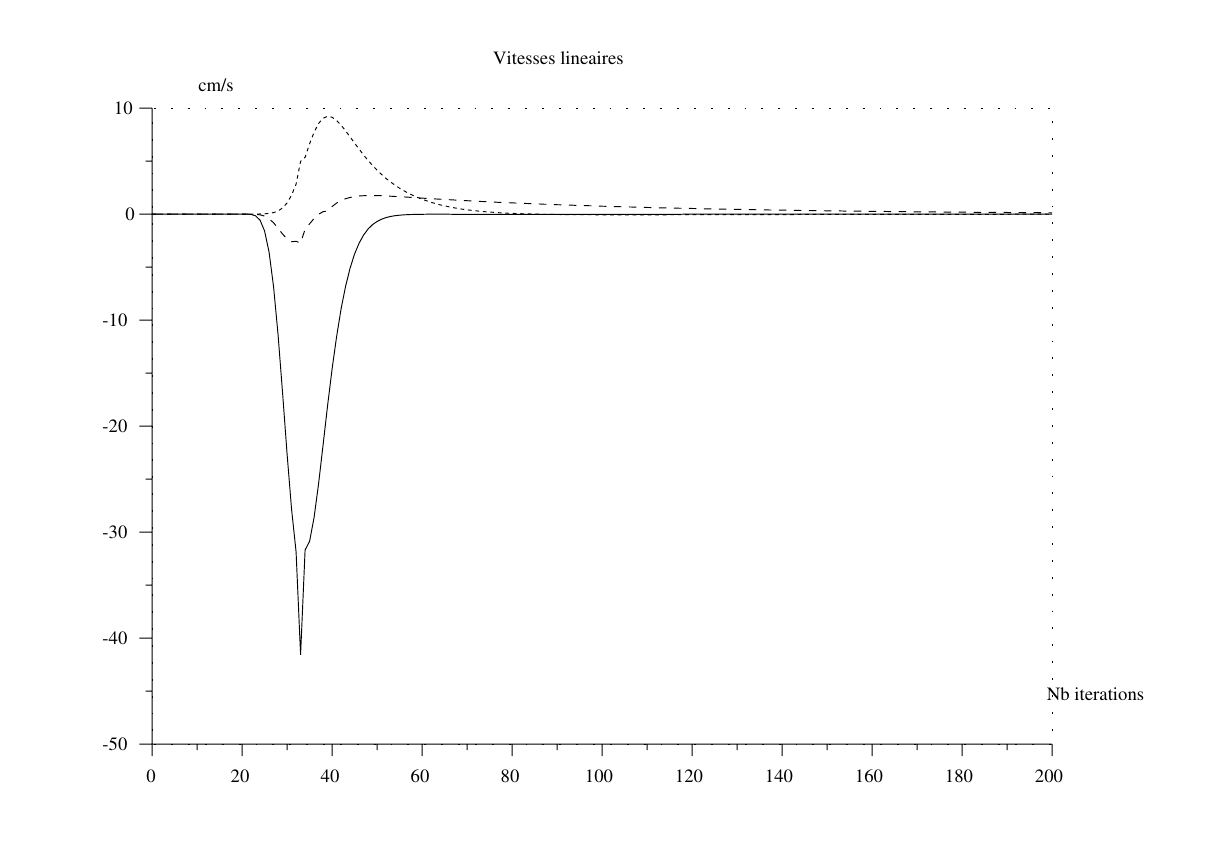}
  }
}
\caption{\label{simu:cascade:rot+trans} Sequential activation of
the control in rotation and then in translation: Rotation (left) and
translation (right) control.}
\end{figure*}

\begin{figure}[tb]
\centerline{
  \resizebox{!}{5cm}{
   \includegraphics{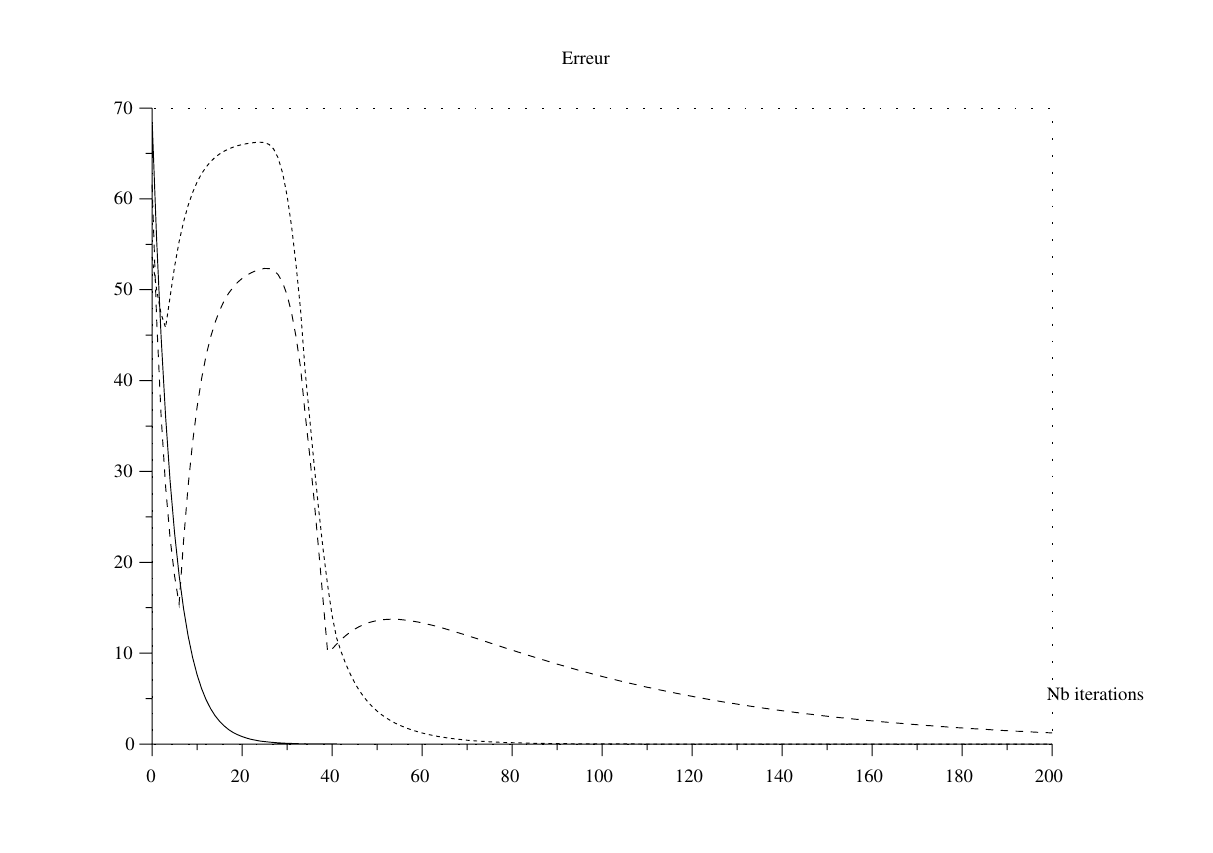}
  }
}
\caption{\label{simu:cascade:erreurs} Sequential activation of
the control in rotation and then in translation: Errors in orientation
(solid line), on the image line projections (dotted line) and on the
laser spot position (dashed  line).}
\end{figure}

\begin{figure*}[tb]
\centerline{
  \resizebox{!}{5cm}{
   \includegraphics{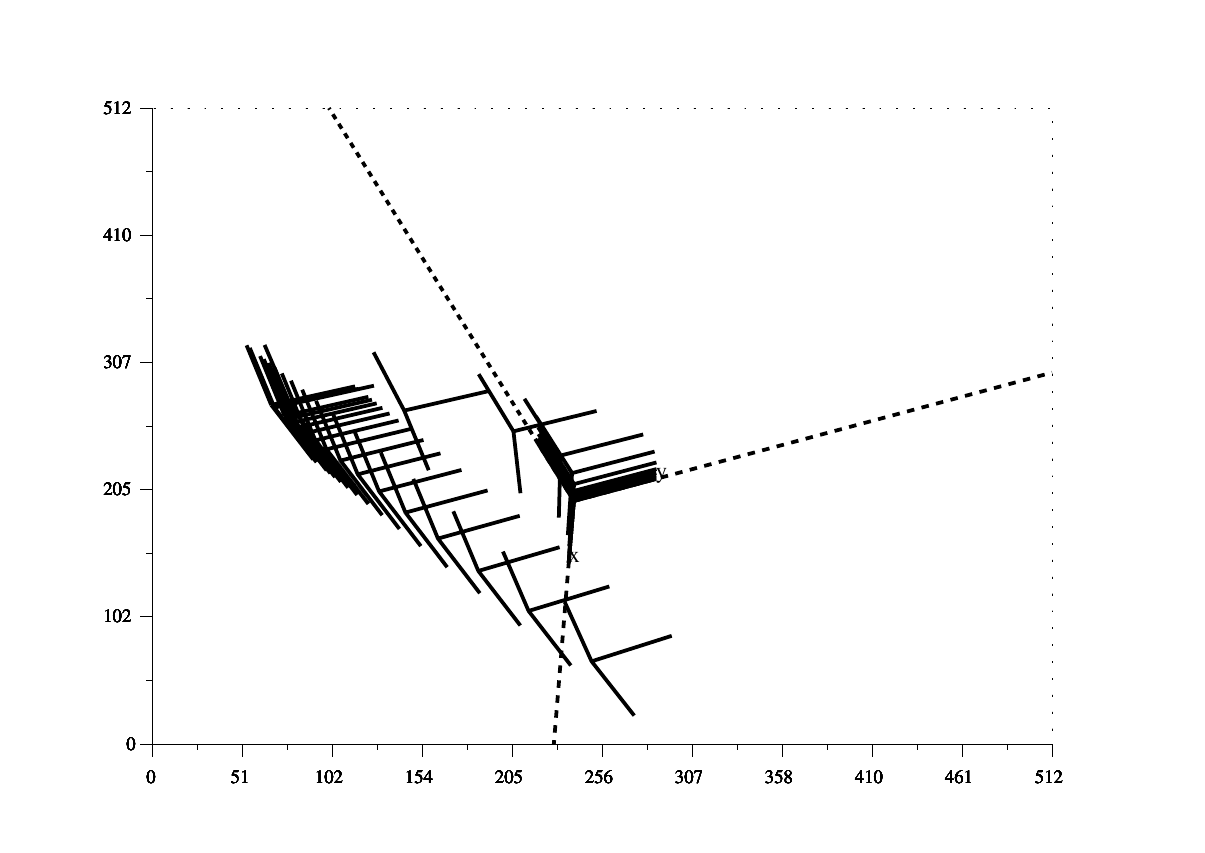}
  }
  \resizebox{!}{5cm}{
   \includegraphics{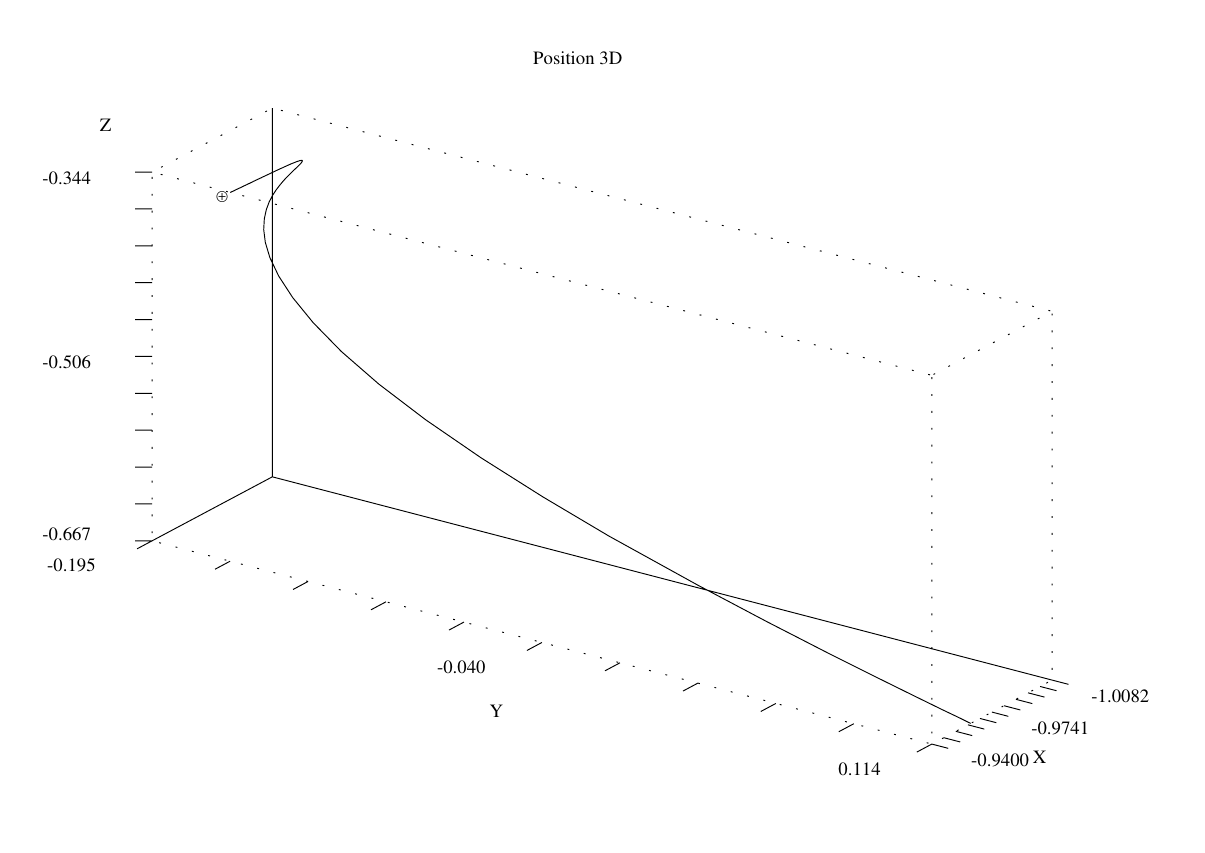}
  }
}
\caption{\label{simu:cascade:traj} Sequential activation of
the control in rotation and then in translation: Trajectory of the
trihedron in the image (left) and in the \ddd space (right).}
\end{figure*}

% Images Simult
\begin{figure*}[tb] 
\centerline{
  \resizebox{!}{5cm}{
   \includegraphics{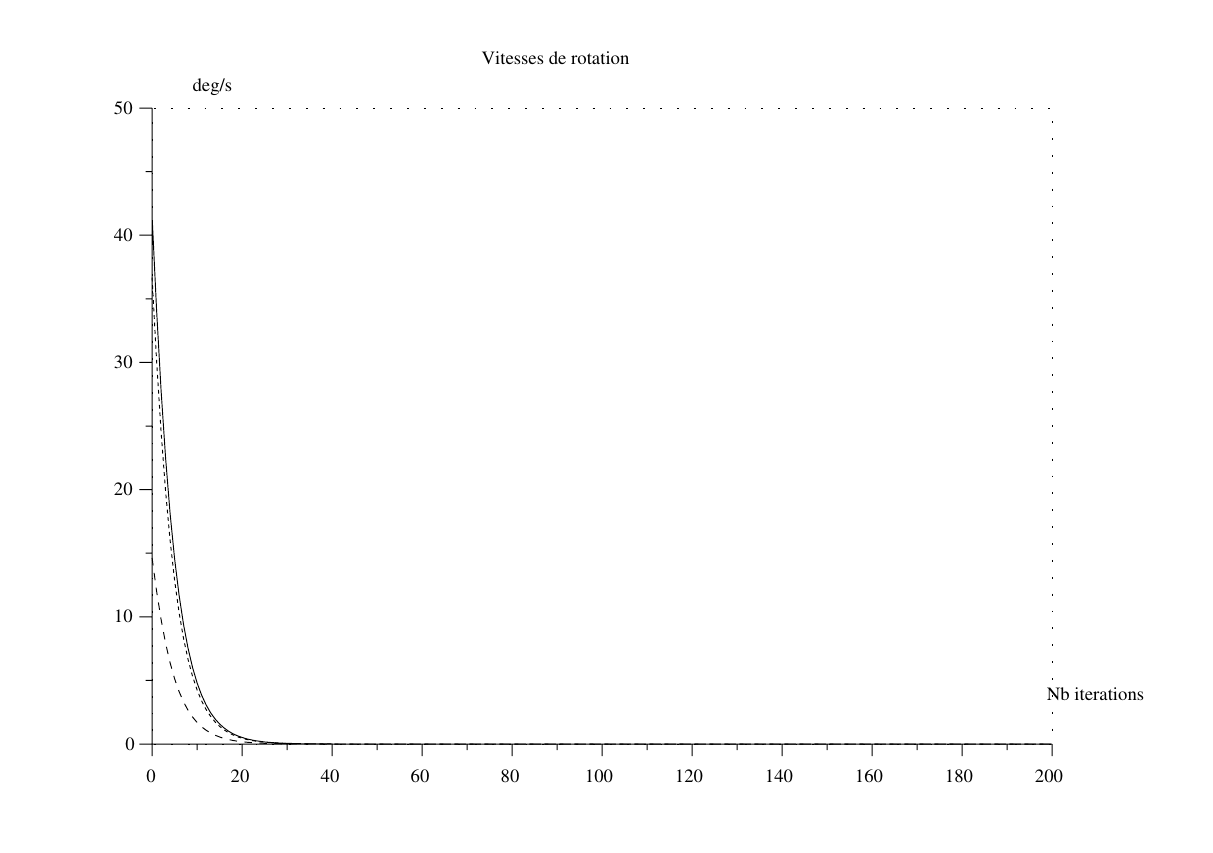}
  }
  \resizebox{!}{5cm}{
   \includegraphics{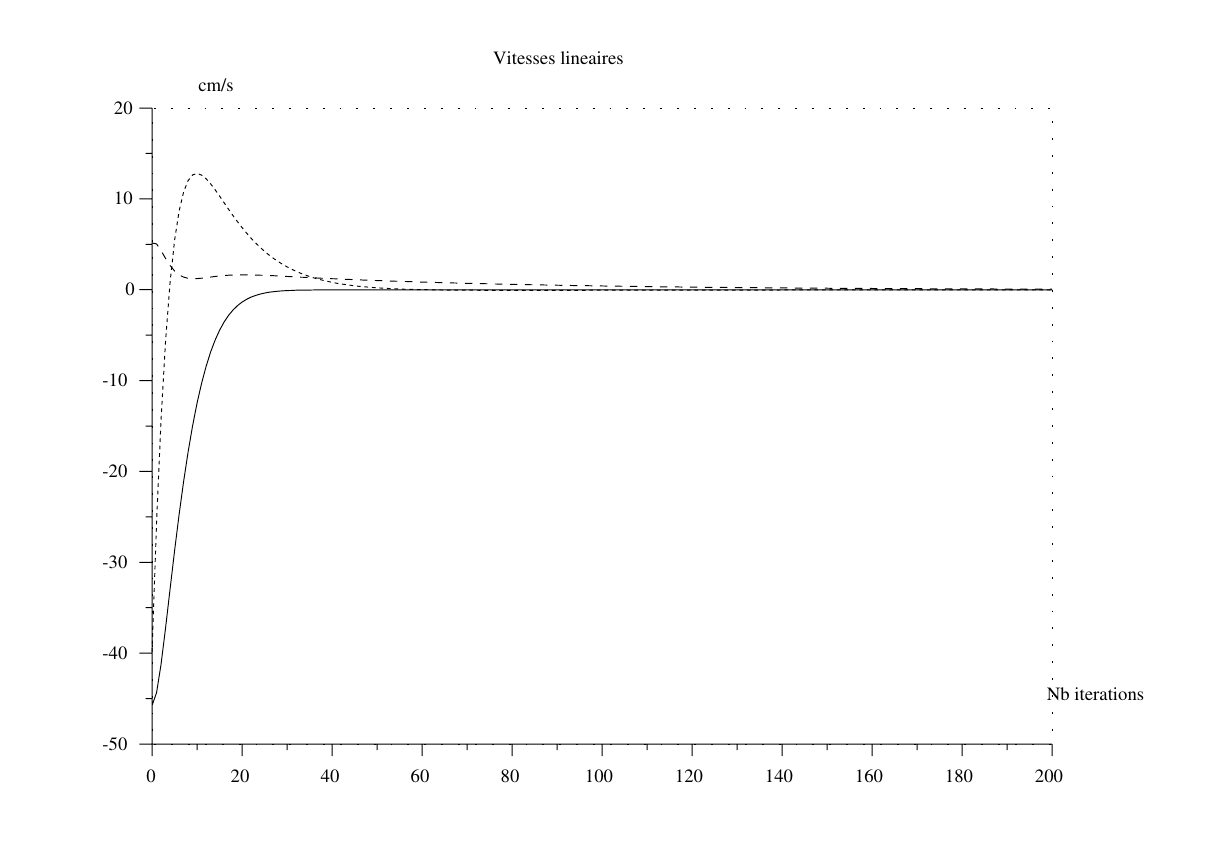}
  }
}
\caption{\label{simu:simult:rot+trans} Simultaneous activation of
the control in rotation and then in translation: Rotation (left) and
translation (right) control.}
\end{figure*}

\begin{figure}[tbp]
\centerline{
  \resizebox{!}{5cm}{
   \includegraphics{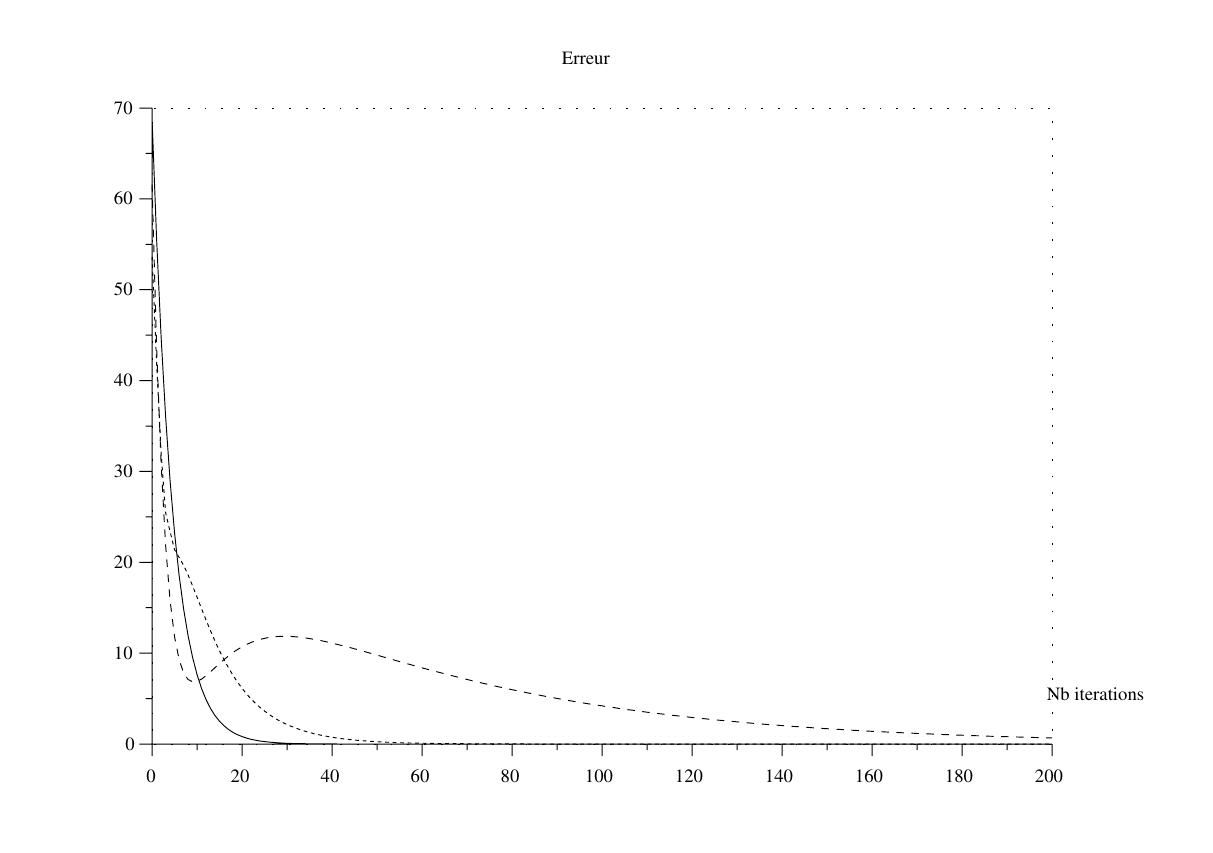}
  }
}
\caption{\label{simu:simult:erreurs} Simultaneous activation of
the control in rotation and then in translation: Errors in orientation
(solid line), on the image line projections (dotted line) and on the
laser spot position (dashed  line).}
\end{figure}

\begin{figure*}[tbp]
\centerline{
  \resizebox{!}{5cm}{
   \includegraphics{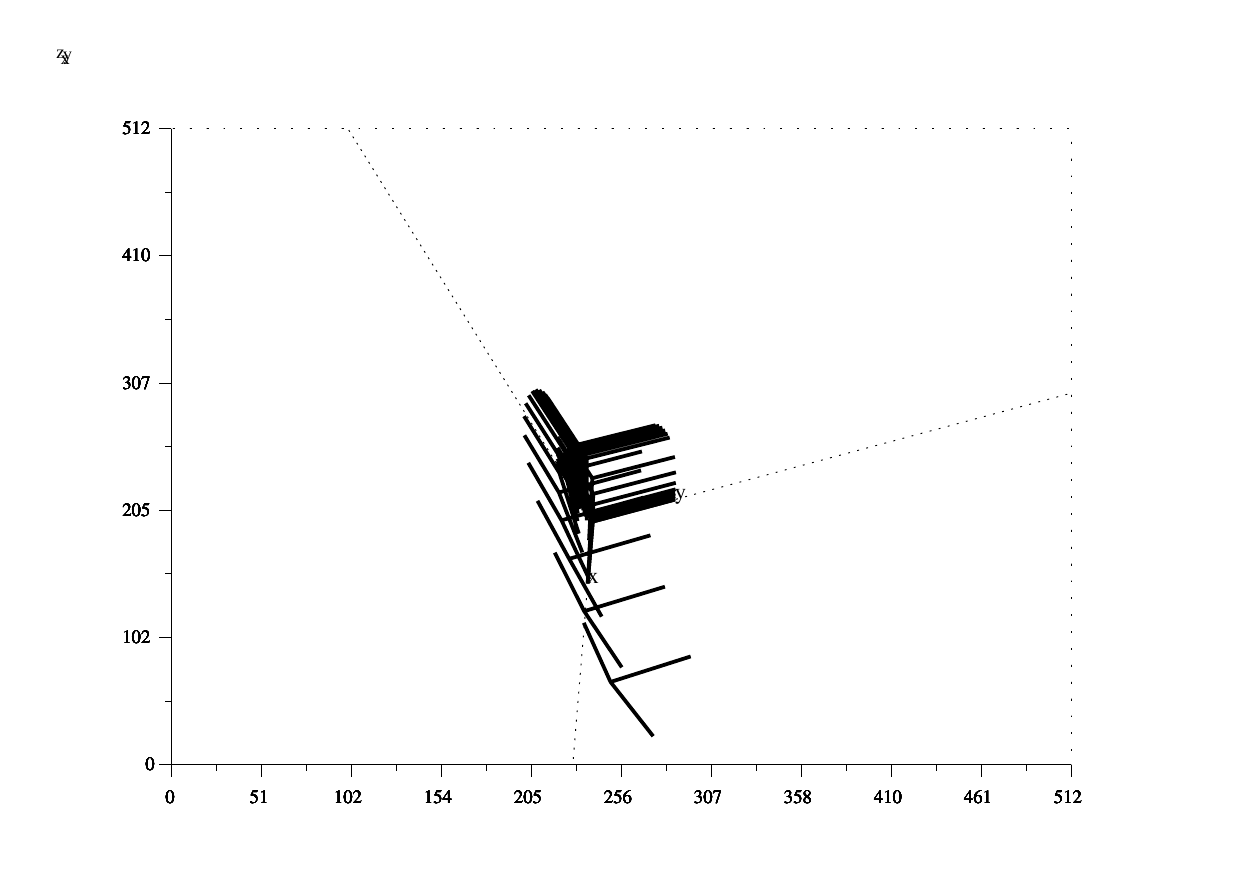}
  }
  \resizebox{!}{5cm}{
   \includegraphics{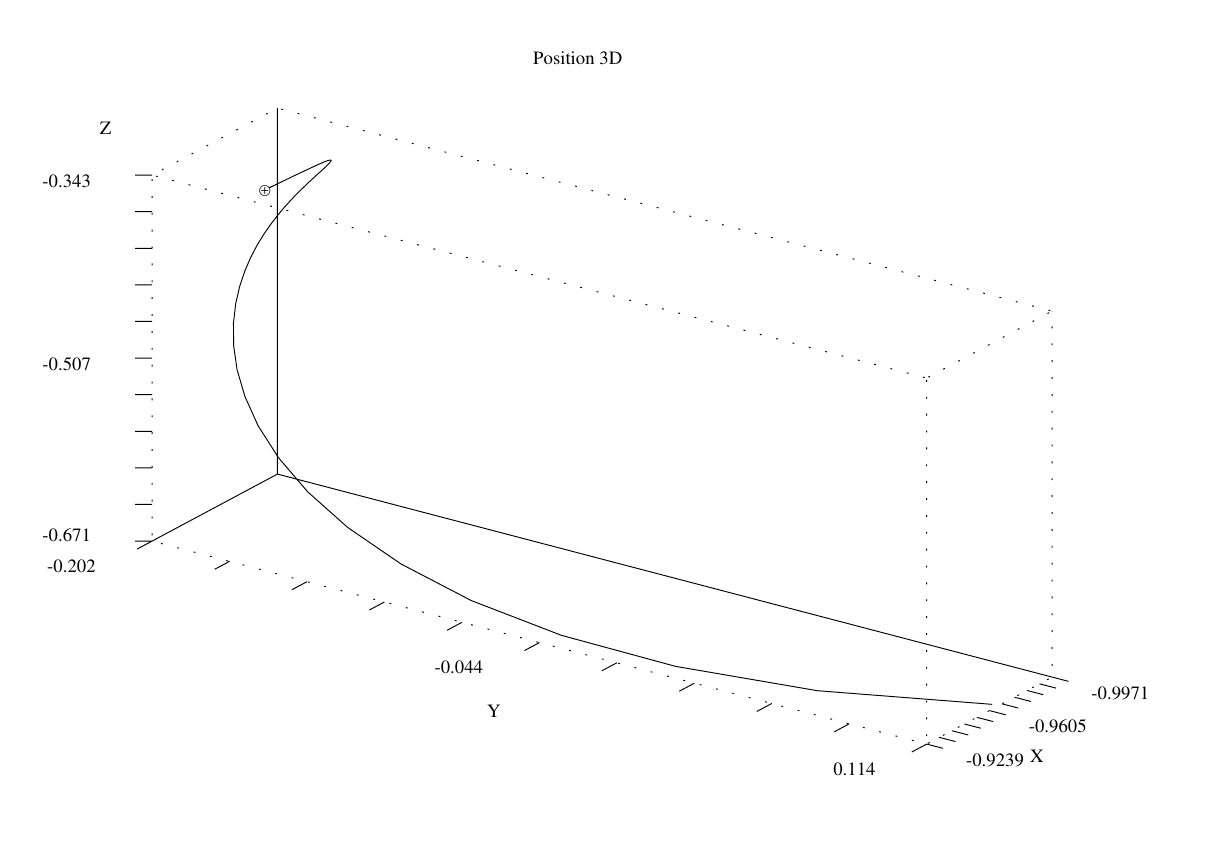}
  }
}
\caption{\label{simu:simult:traj} Simultaneous activation of
the control in rotation and then in translation: Trajectory of the
trihedron in the image (left) and in the \ddd space (right).}
\end{figure*}

Let us first consider the case of the sequential activation. In a
first time, only rotation control is activated
(Figure~\ref{simu:cascade:rot+trans}). It results in an 
exponential decay of the 
error in orientation together with an increase of the errors on the
image line projections and on the laser spot positions
(Figure~\ref{simu:cascade:erreurs}). Then, 
control in translation is activated, yielding the convergence of
the image line projection errors towards 0, without any influence on
the orientation. This can be seen in the image
(Figure~\ref{simu:cascade:traj}): the trihedron drifts to 
the left while orientation control is activated and then is brought
back to its desired position (large trihedron) by the control in
translation. 
Finally, depth control is
activated, bringing the laser spot to its desired position in the
image (Figure~\ref{simu:cascade:erreurs}), without the partial pose to be
modified. One can verify (Figure~\ref{simu:cascade:traj}) that this depth control
yields a straight line in the \ddd space.

Simultaneous activation of the control in rotation and translation can
be easily seen in Figure~\ref{simu:simult:rot+trans}. The partial
decoupling of the control 
law makes that the convergence in orientation is unchanged. On the
opposite, the simultaneous activation suppresses the drifting of the trihedron
(Figure~\ref{simu:simult:erreurs}). This results in a smaller amplitude of
the image motion  but in a larger one in the \ddd space
(Figure~\ref{simu:simult:traj}). Finally, one can see that the desired
partial pose is obtained sooner than in the sequential case and that
the depth control is thus activated sooner
(Figure~\ref{simu:simult:erreurs}).

\subsection{Experimental Results}

In this section, we focus on the experimental validation of the 
control law bases on lines. Thus, we disabled the depth control using
the laser beam and we simplified the visual tracking by using a
trihedron with sharp contrast rather than the ship part.

In all the experiments we led, the task was to bring back the camera
to a prerecorded position.

% Images translation pure
\begin{figure*}[htb]
\centerline{
\resizebox{!}{5cm}{\includegraphics{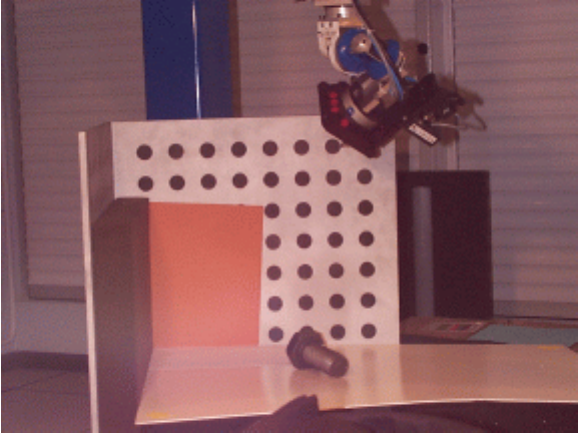}}
\resizebox{!}{5cm}{\includegraphics{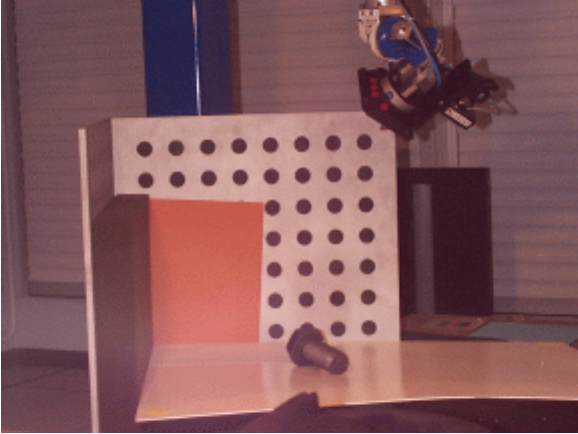}}
}
\caption{\label{consigne_trans} Pure translation:  Desired (left) and initial (right)
positions of the robot.}  
\end{figure*}

\begin{figure}[htb]
\psfrag{hi}{}
\psfrag{ui}{}
\centerline{%
\rotatebox{-90}{\includegraphics[width=0.21\textwidth]{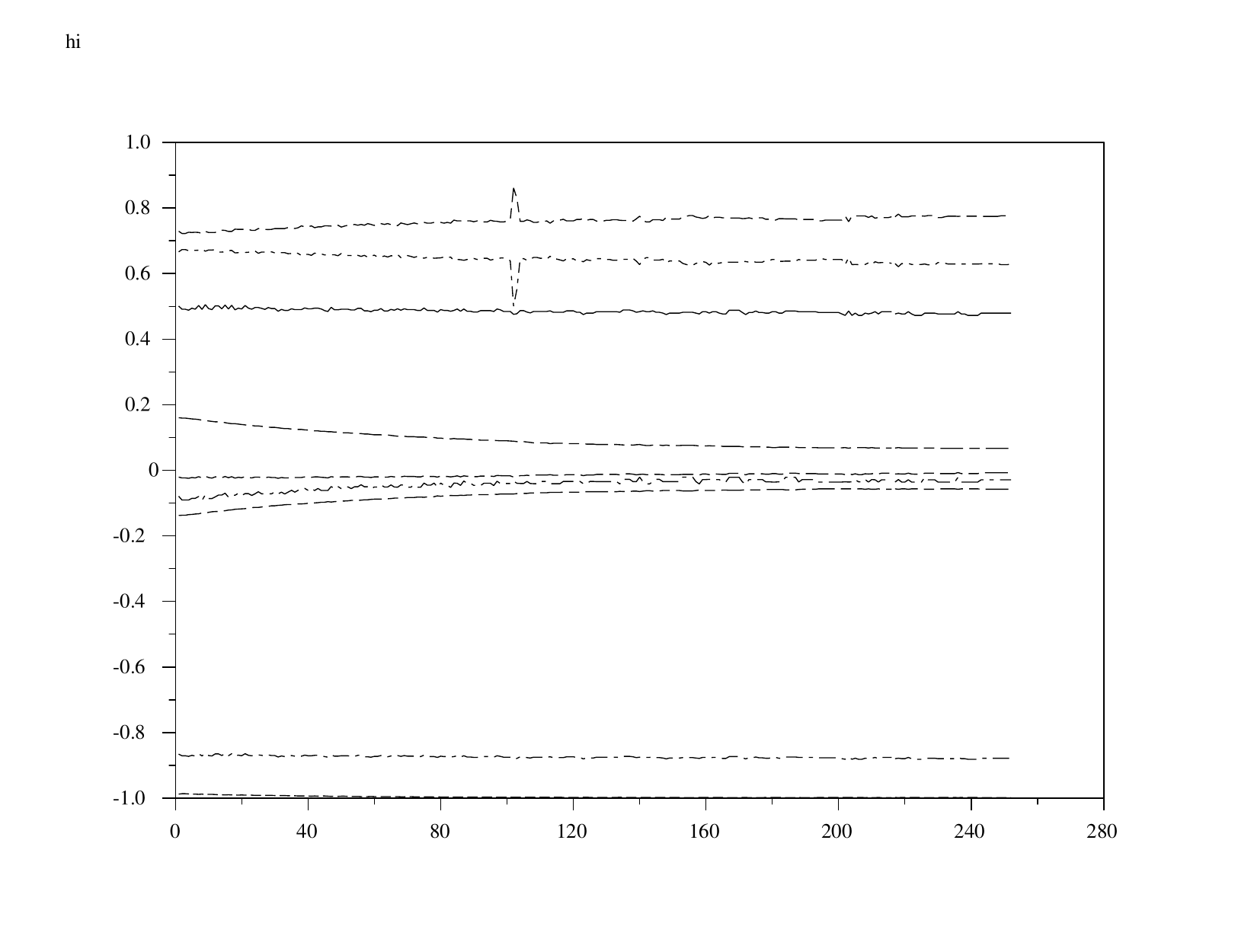}}
\rotatebox{-90}{\includegraphics[width=0.21\textwidth]{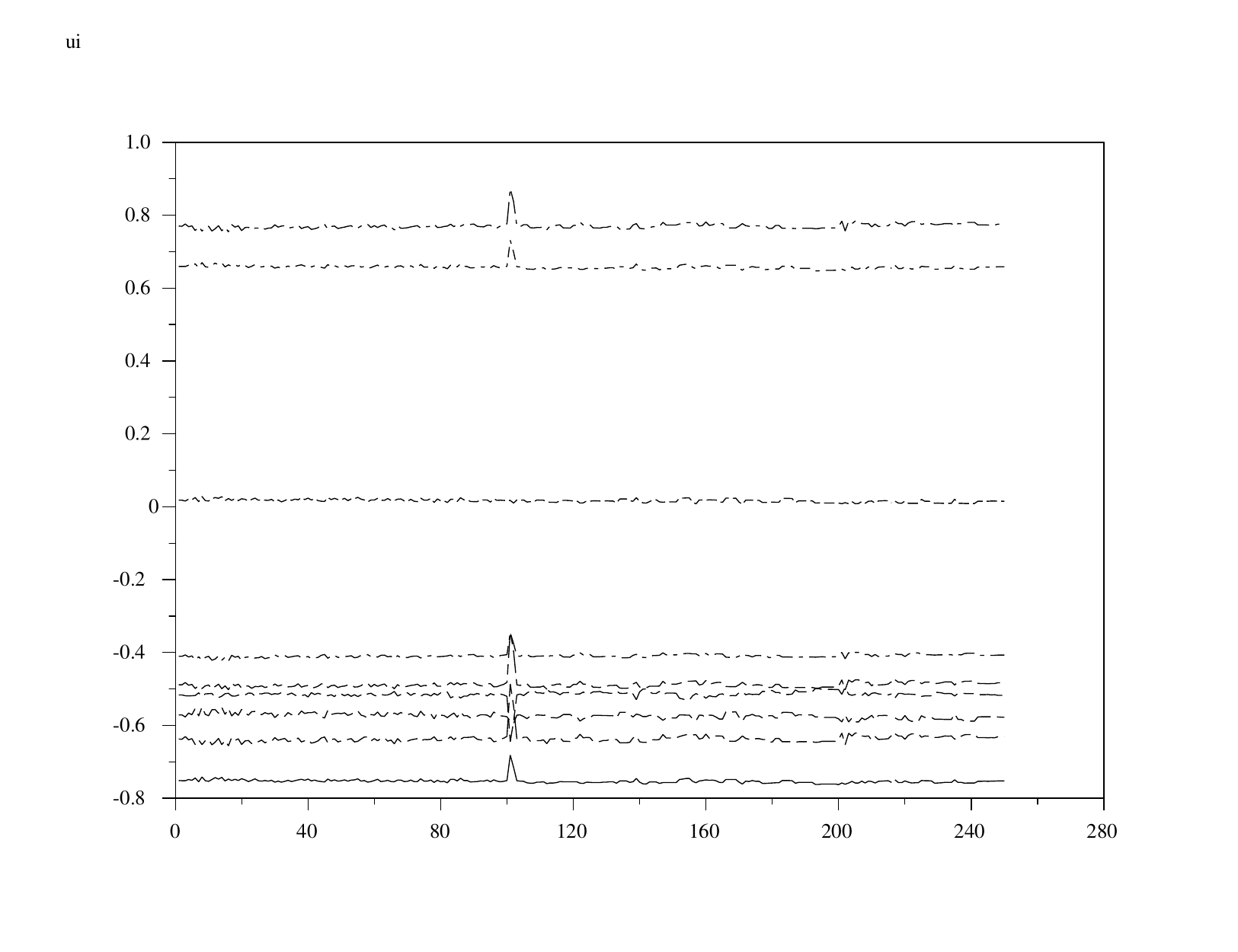}}
}
\caption{\label{transl_uh} Pure translation: coefficients of the vectors
$\hn_i$ (left) and $\unit{u}_i$ (right) along the control.}
\end{figure}
\begin{figure}[htb]
\psfrag{V}{ }
\psfrag{Omega}{ }
\centerline{%
\rotatebox{-90}{\includegraphics[width=0.21\textwidth]{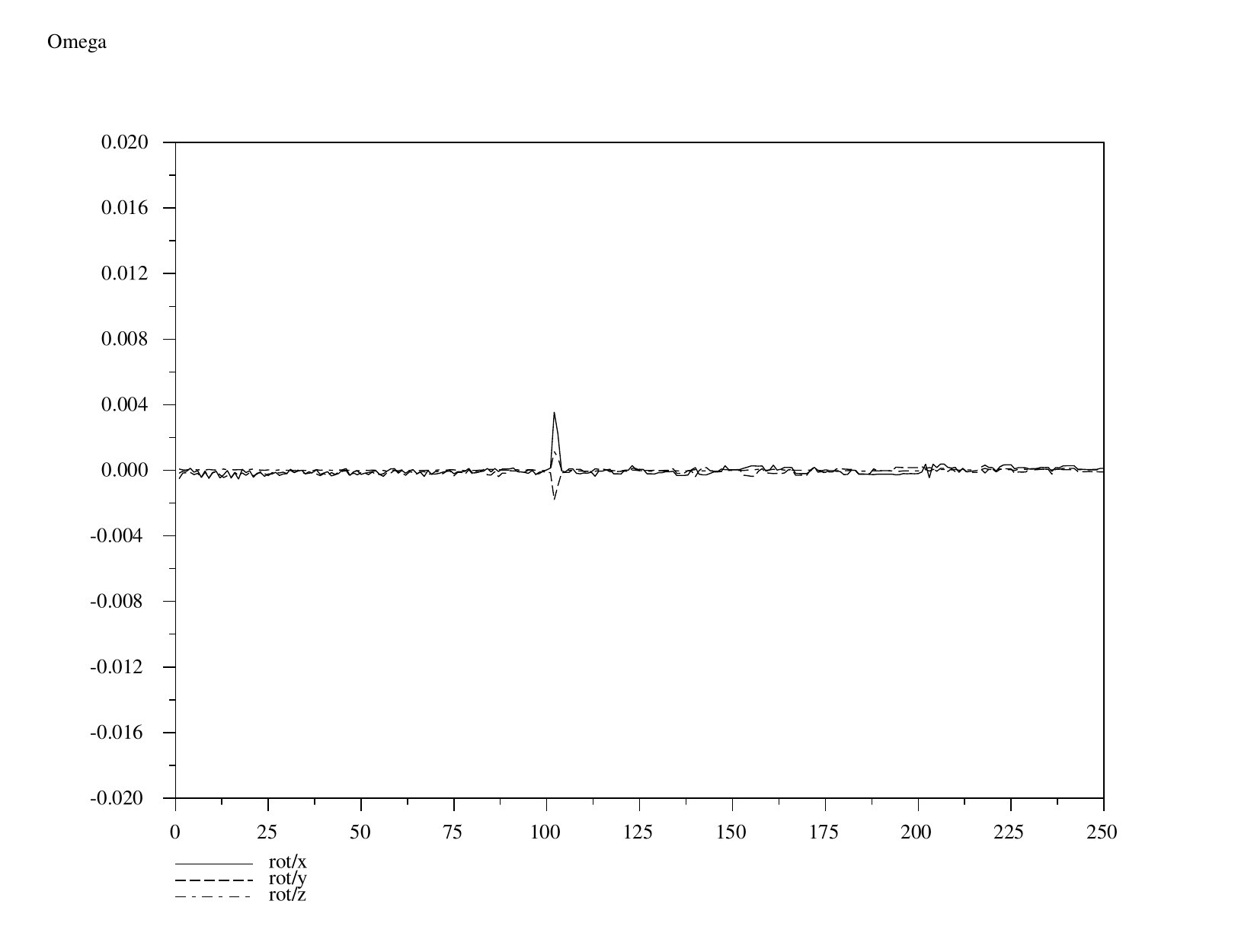}}
\rotatebox{-90}{\includegraphics[width=0.21\textwidth]{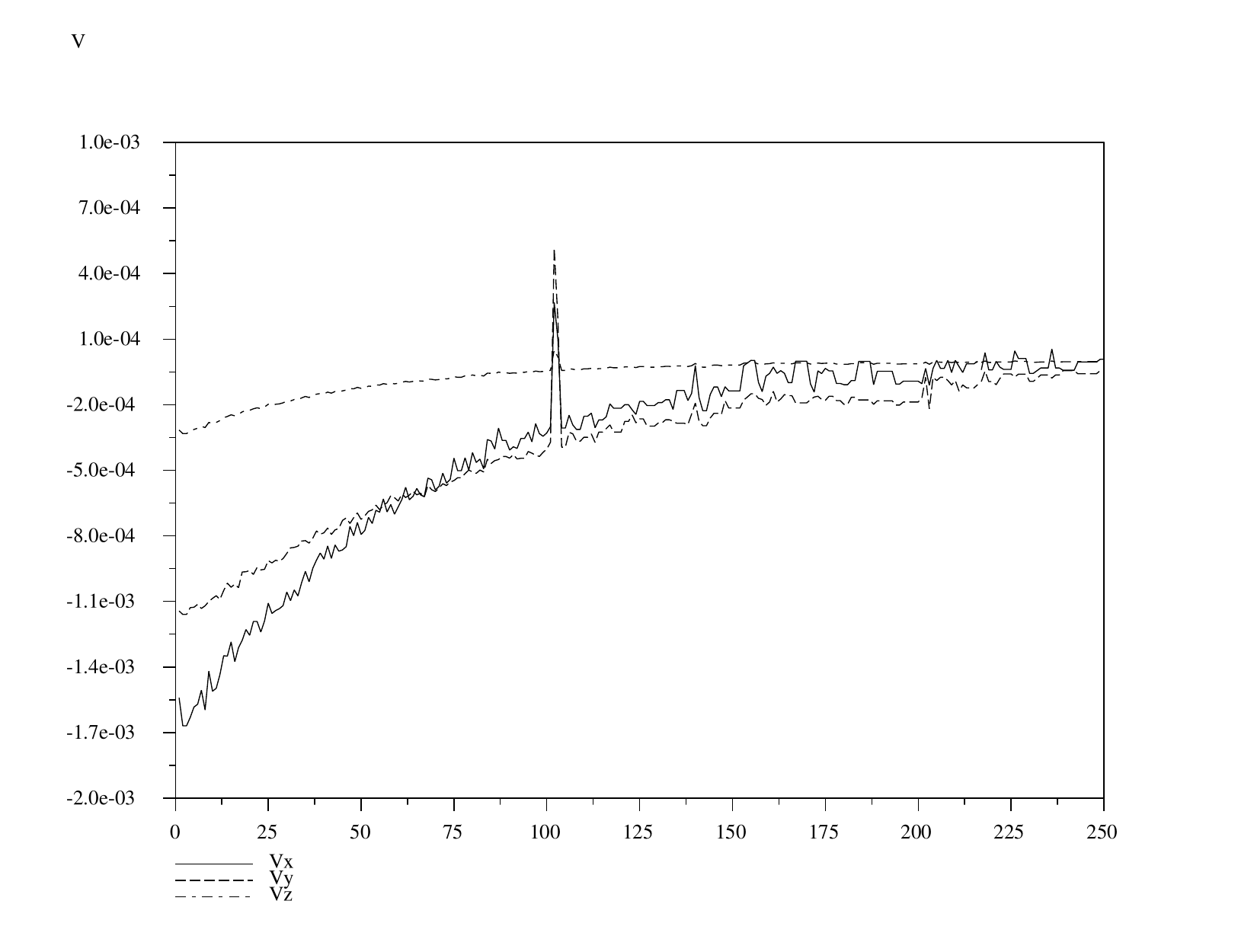}}
}
\caption{\label{transl_tau} Pure translation:  Rotation (left) and
translation (right) control.}
\end{figure}
\begin{figure}[htb]
\psfrag{h}{$\sum \|\hn_i - \hn_i^* \|$}
\psfrag{u}{$\sum \| \unit{u}_i \times \unit{u}_i^* \|$}
\centerline{%
\rotatebox{-90}{\includegraphics[width=0.21\textwidth]{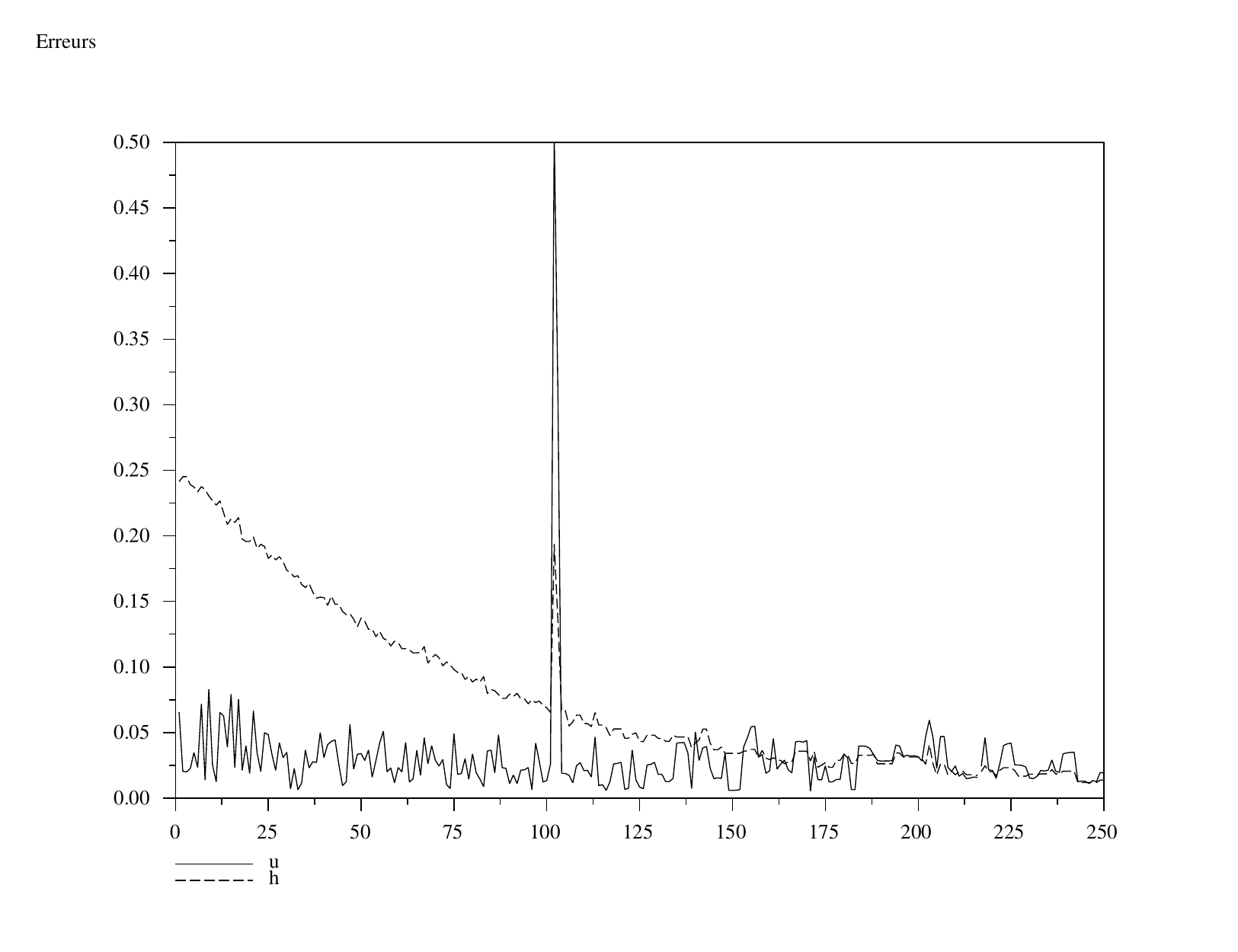}}
\rotatebox{-90}{\includegraphics[width=0.21\textwidth]{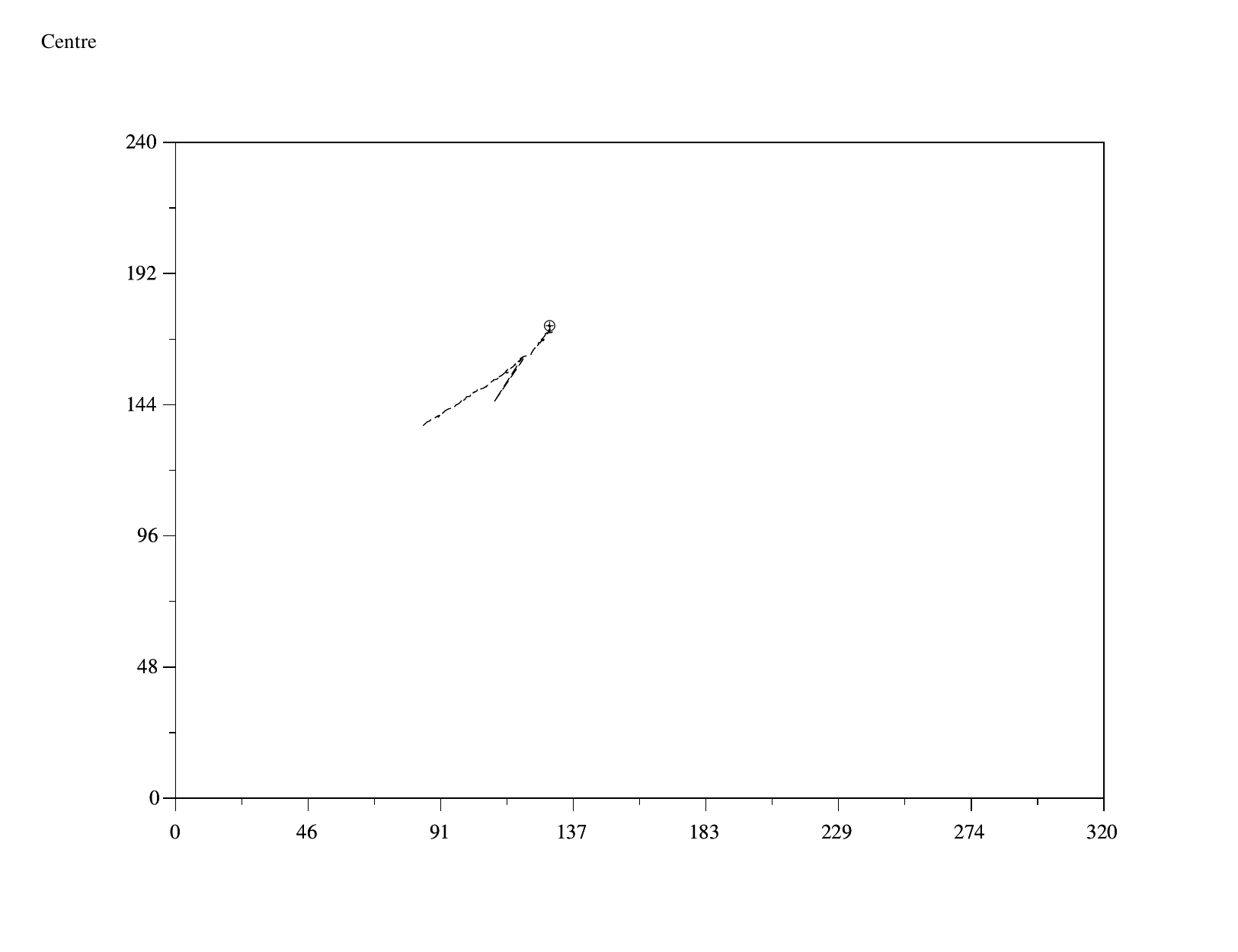}}
}
\caption{\label{transl_err} Pure translation: Errors (left) and
image trajectory of the trihedron center (right).}
\end{figure}

\paragraph{Pure translation --}
In the first experiment, we took the camera some ten centimeters away
from its goal position
(Figure~\ref{consigne_trans}). 
Notice first that during the realization of the task, a slight
tracking error occured (Figure~\ref{transl_uh}, left), which yields
perturbations on the following curves. Notice also that the 
computation of the \ddd line orientations is stable since the tracking
error does not affect it much (Figure~\ref{transl_uh},
right). Moreover, these orientations are constant, yielding a zero
control in orientation (Figure~\ref{transl_tau}, right). Hence, the
controlled motion is a pure translation as expected. This exhibits the
partial decoupling of our control law between translation and
rotation. The convergence in translation is here exponential
(Figure~\ref{transl_tau}, left). As expected the error in the image
({\small $\sum_{i=1}^3 \|\hn_i - \hn_i^* \|$}) decreases and the
orientation error is zero up to noise (Figure~\ref{transl_err}, left). 
Finally, notice the almost straight image trajectory of the trihedron
center (Figure~\ref{transl_err}, right).

% Images rotation /Z
% \begin{figure}[p]
% \psfrag{hi}{}
% \psfrag{ui}{}
% \centerline{%
% \rotatebox{-90}{\resizebox{!}{7cm}{\includegraphics{rot_hi-eps-converted-to.pdf}}}
% \rotatebox{-90}{\resizebox{!}{7cm}{\includegraphics{rot_ui-eps-converted-to.pdf}}}
% }
% \caption{\label{rot_uh} Pure rotation:  coefficients of the vectors
% $\hn_i$ (left) and $\unit{u}_i$ (right) along the control.}
% \end{figure}
%
\begin{figure}[tb]
\psfrag{V}{ }
\psfrag{Omega}{ }
\centerline{%
\rotatebox{-90}{\includegraphics[width=0.21\textwidth]{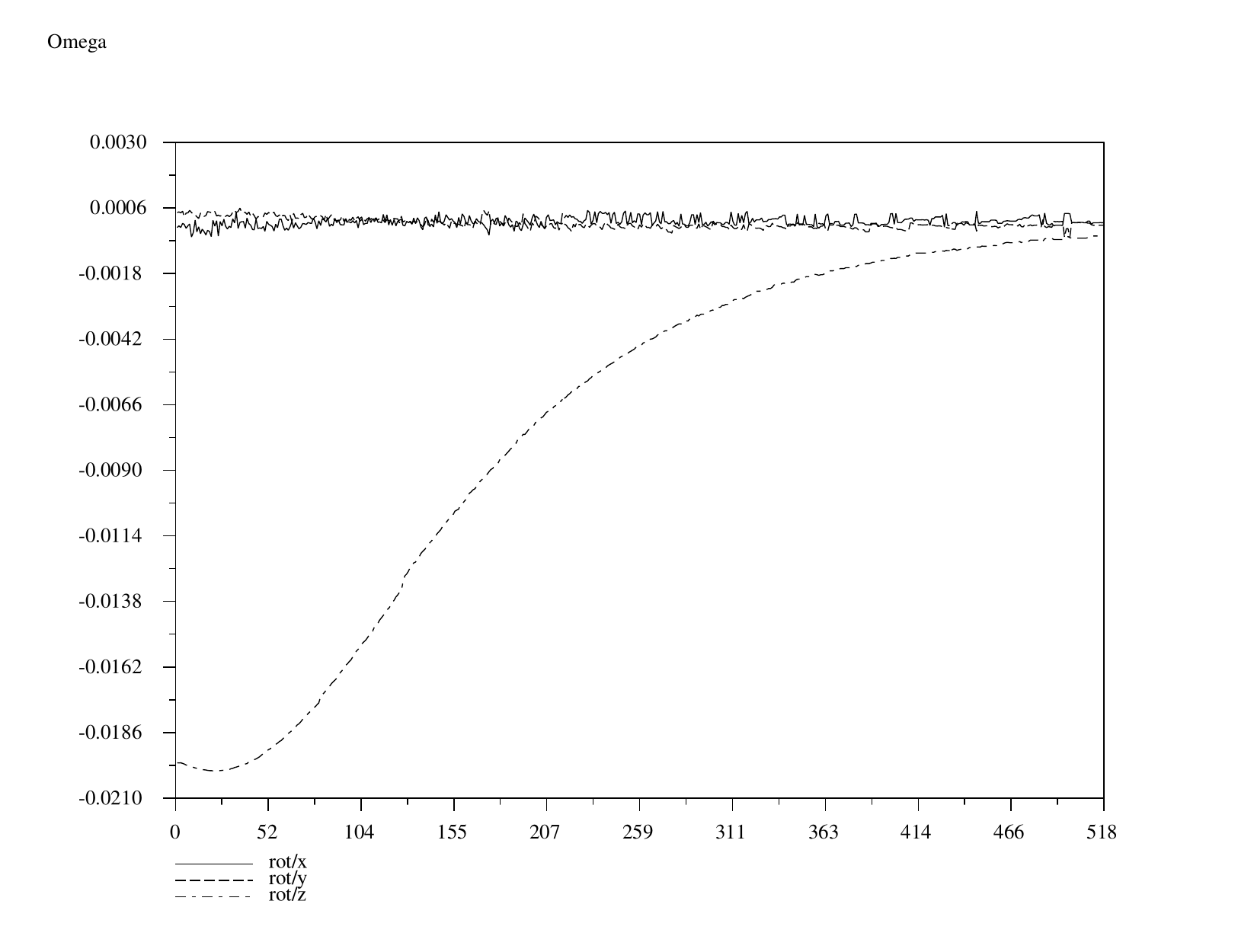}}
\rotatebox{-90}{\includegraphics[width=0.21\textwidth]{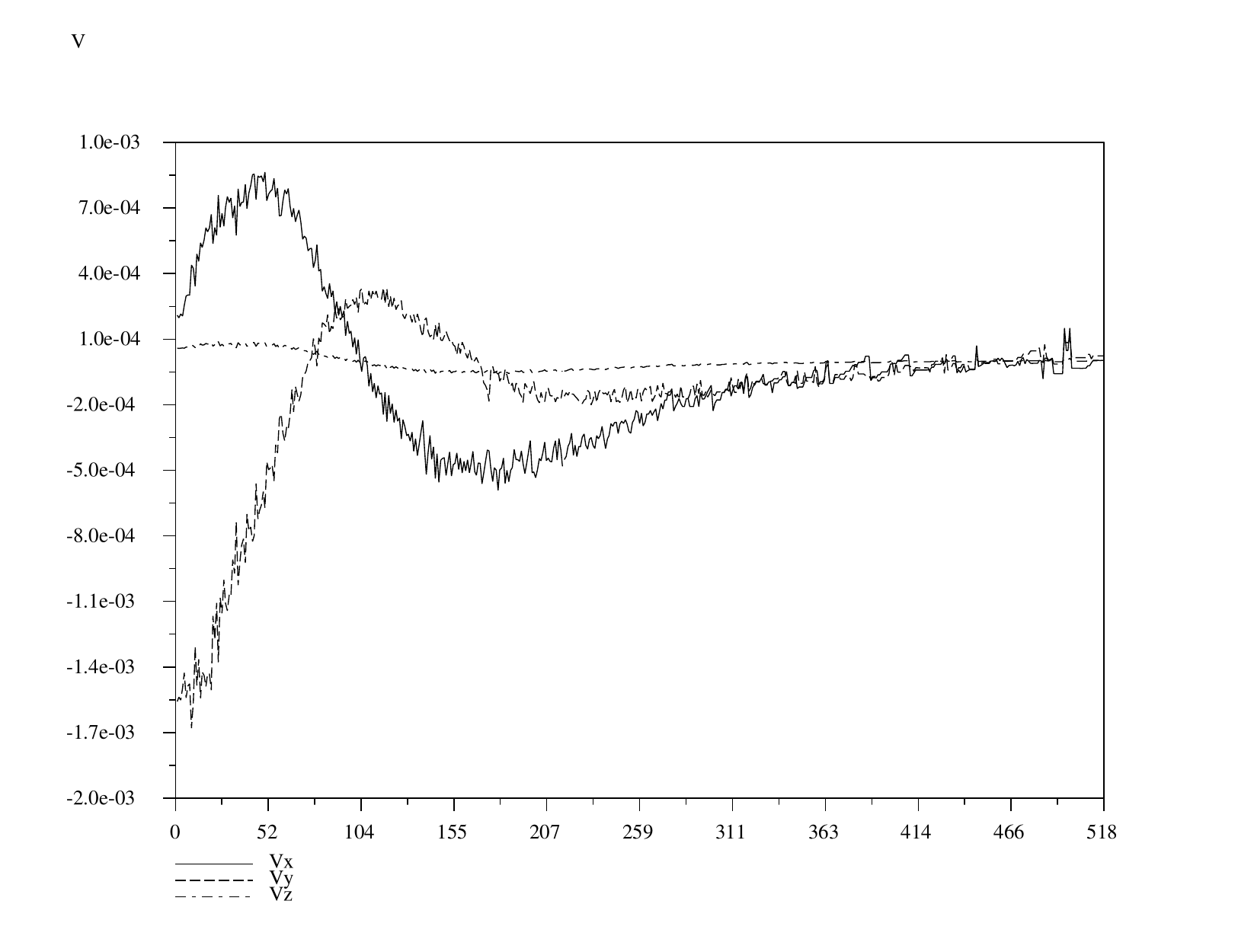}}
}
\caption{\label{rot_tau} Pure rotation:  Rotation (left) and
translation (right) control.}
\end{figure}
\begin{figure}[tb]
\psfrag{h}{$\sum \|\hn_i - \hn_i^* \|$}
\psfrag{u}{$\sum \| \unit{u}_i \times \unit{u}_i^* \|$}
\centerline{%
\rotatebox{-90}{\includegraphics[width=0.21\textwidth]{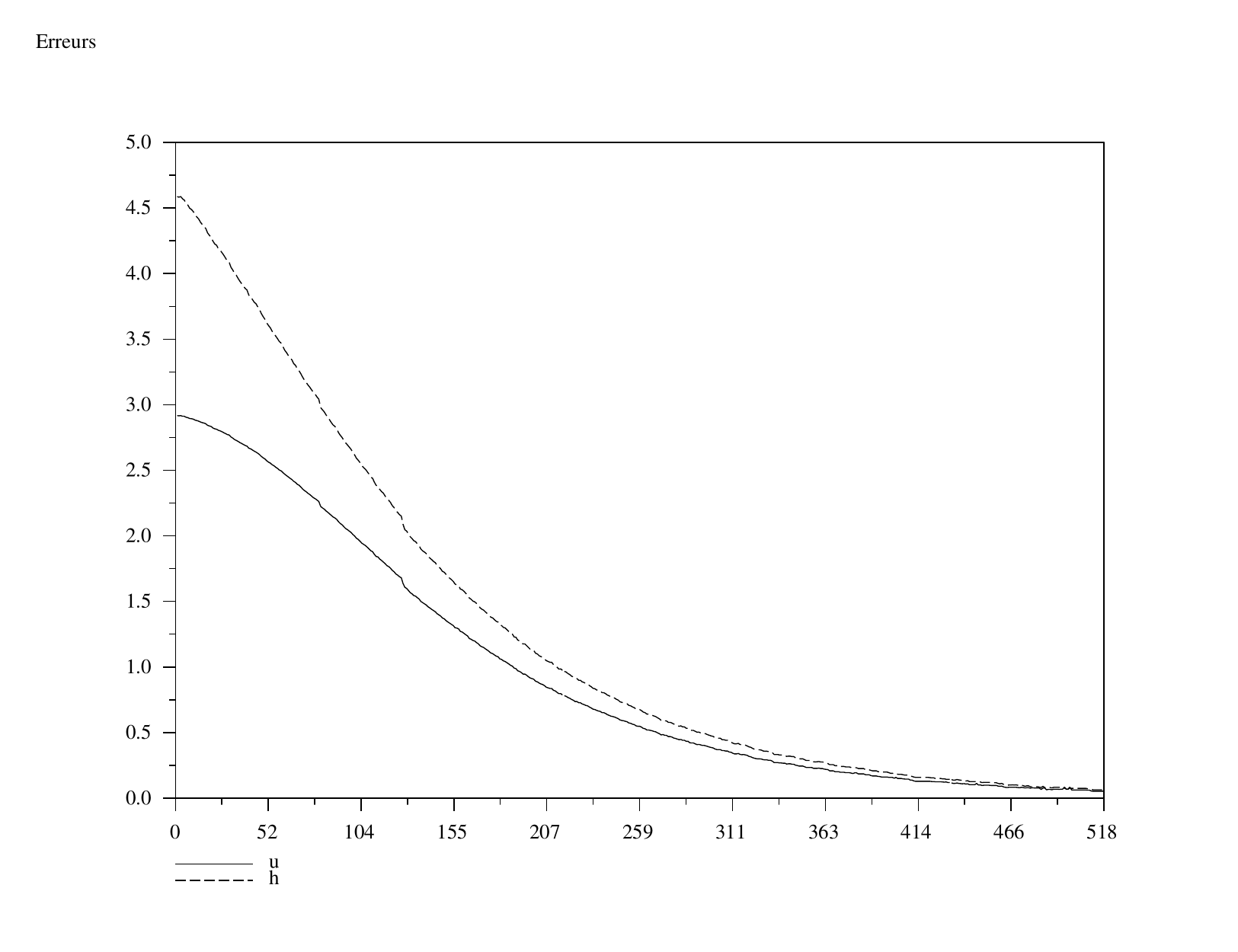}}
\rotatebox{-90}{\includegraphics[width=0.21\textwidth]{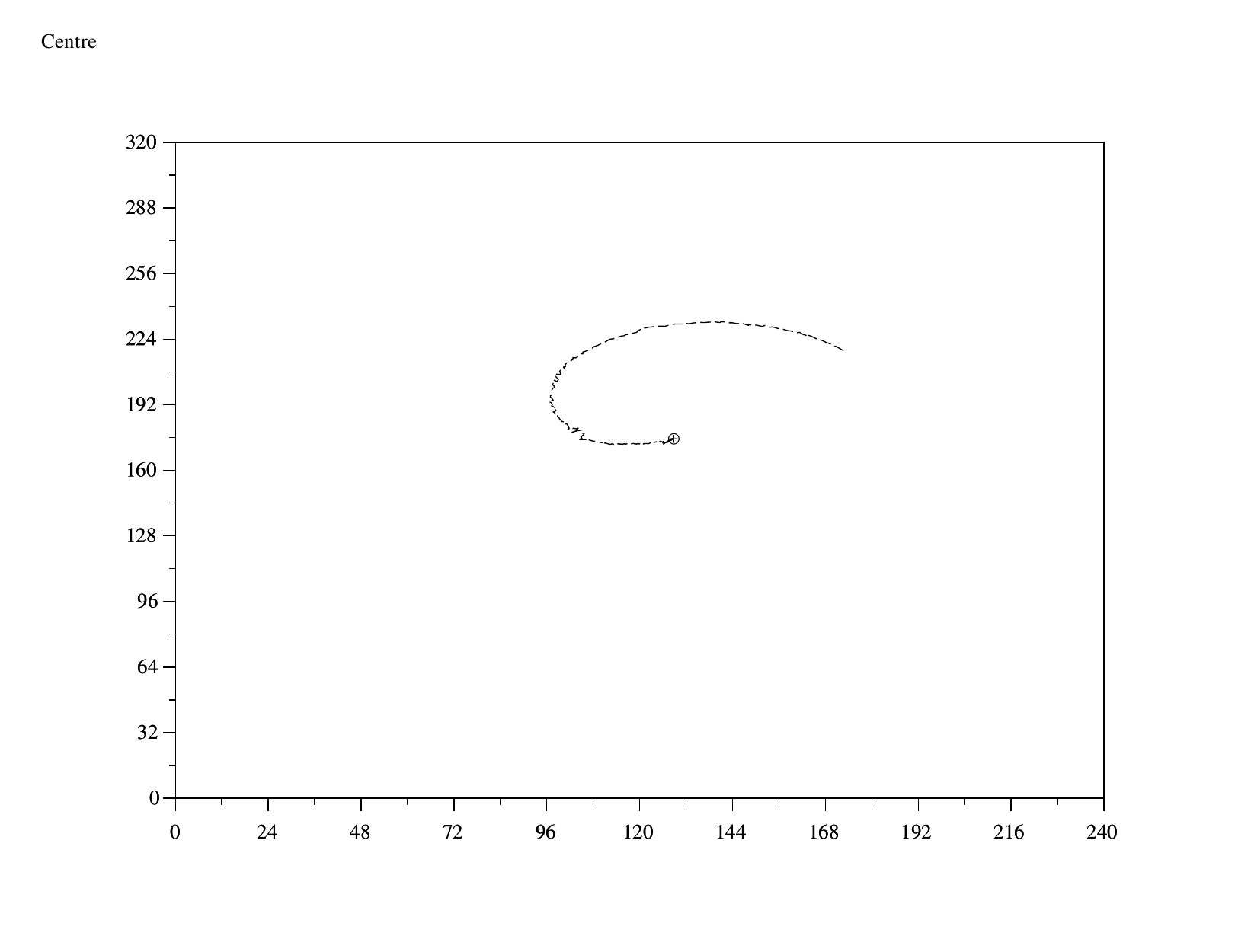}}
}
\caption{\label{rot_err} Pure rotation: Errors (left) and
image trajectory of the trihedron center (right).}
\end{figure}

\begin{figure*}[tb]
\centerline{
\resizebox{!}{4cm}{\includegraphics{consigne.pdf}}
\resizebox{!}{4cm}{\includegraphics{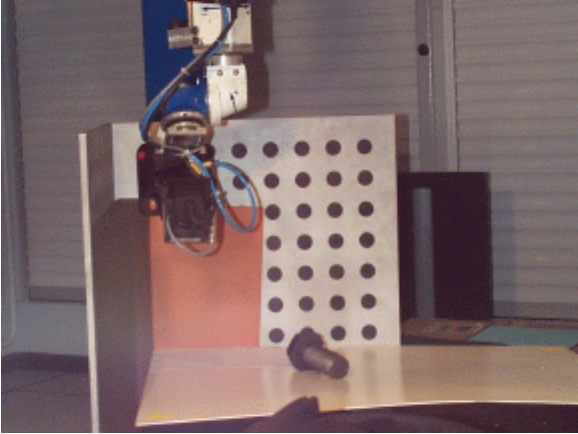}}
%\rotatebox{180}{\resizebox{!}{3cm}{\includegraphics{image10.pdf}}}
\resizebox{!}{4cm}{\includegraphics{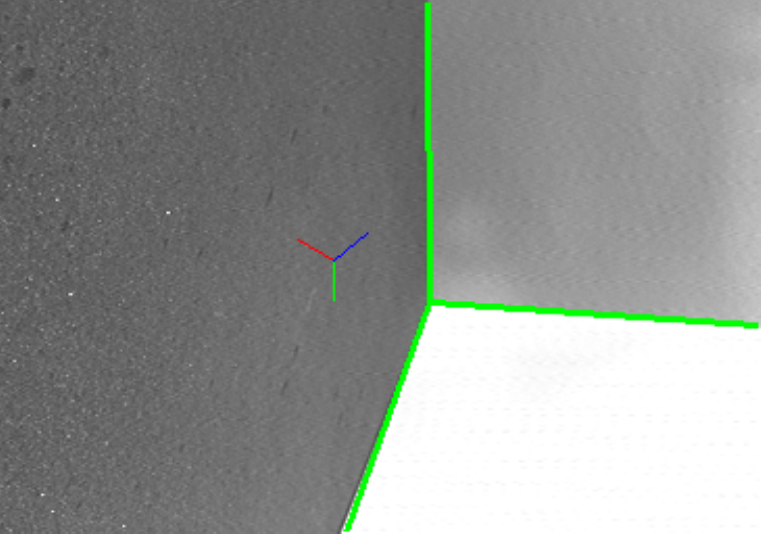}}
}
\caption{\label{consigne_cplx3} Complex motion. Left: robot desired position. Center:
robot initial position. Right: initial image (superimposed: desired
image of the trihedron). 
}  
\end{figure*}

\begin{figure}[tb]
\psfrag{V}{ }
\psfrag{Omega}{ }
\centerline{%
\rotatebox{-90}{\includegraphics[width=0.21\textwidth]{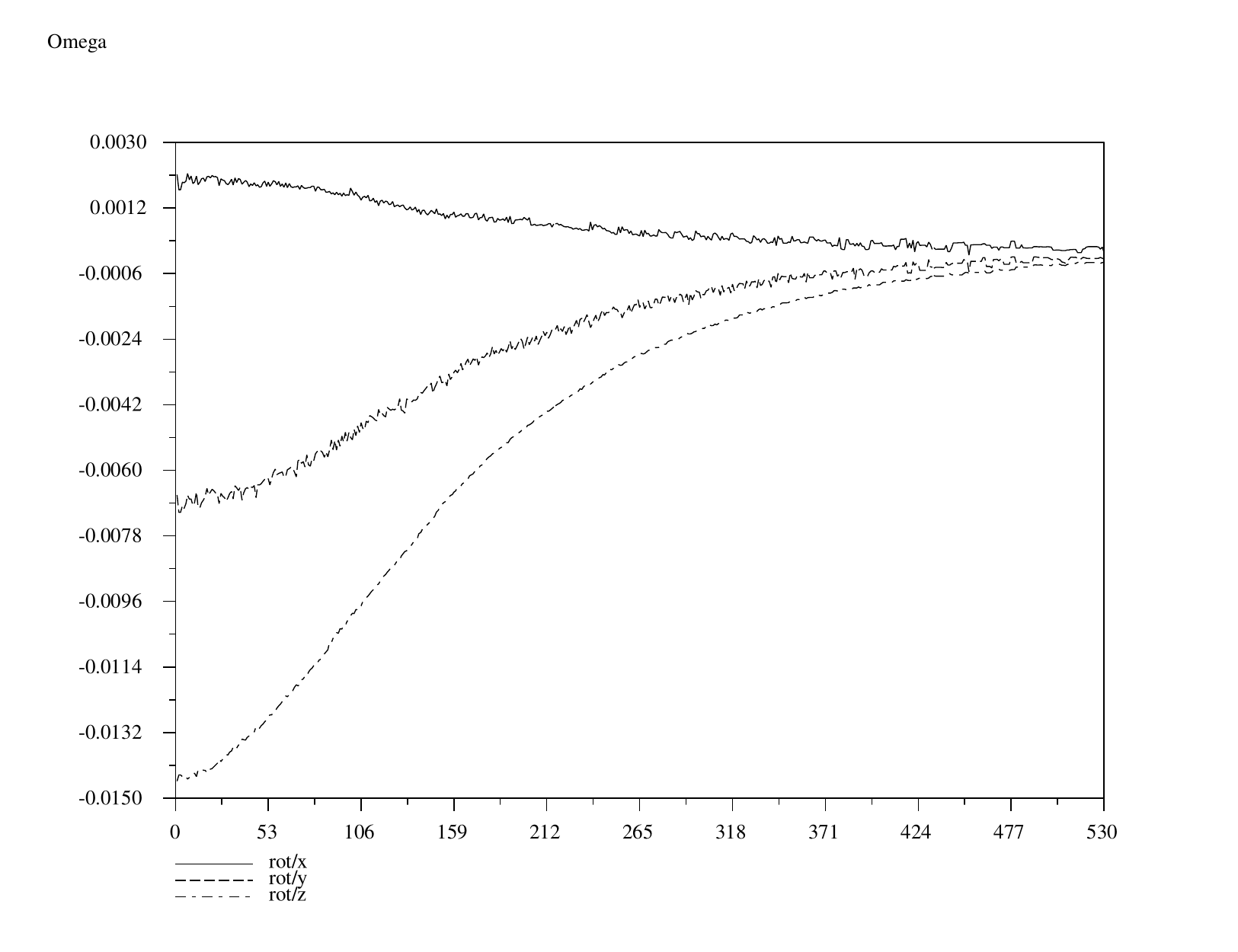}}
\rotatebox{-90}{\includegraphics[width=0.21\textwidth]{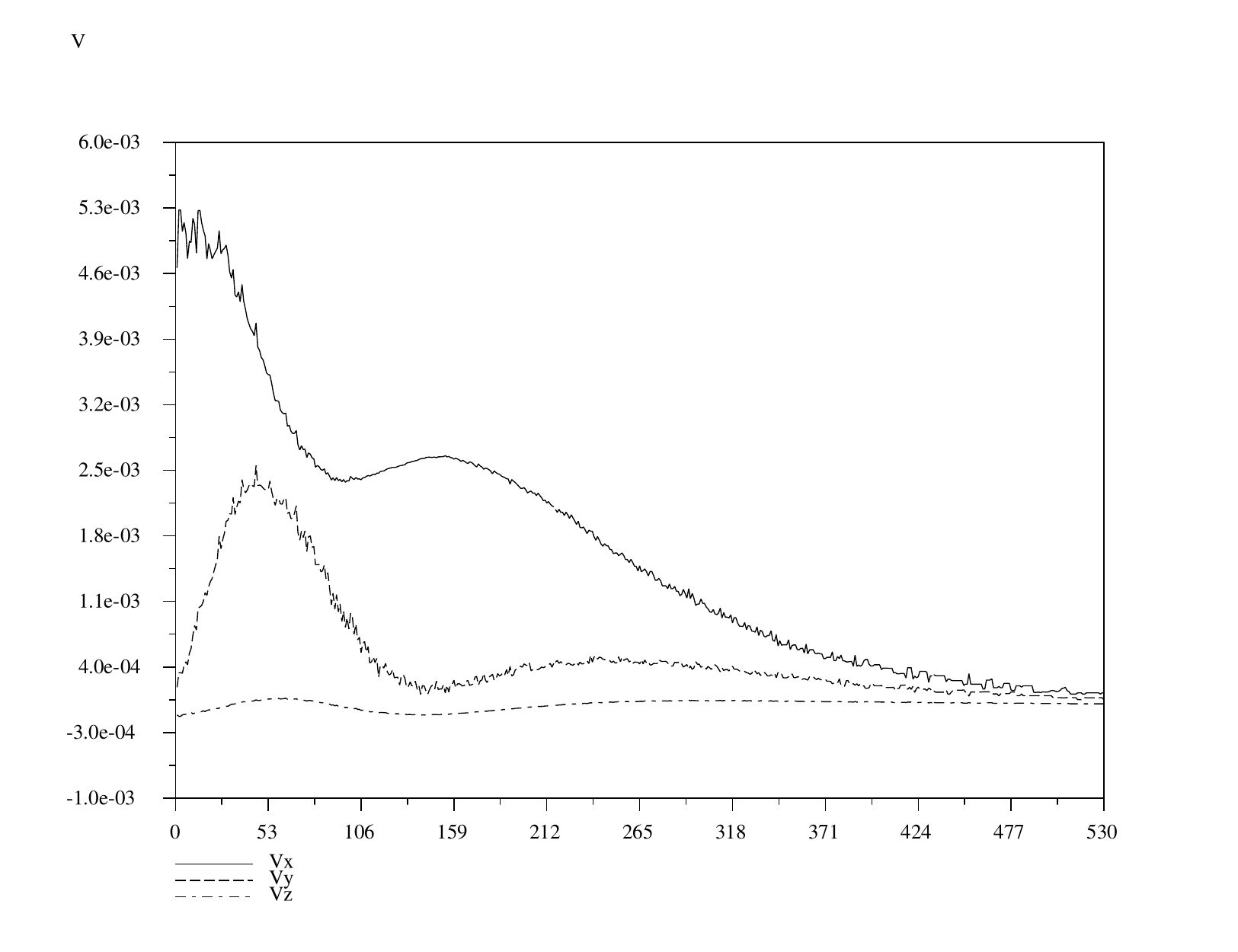}}
}
\caption{\label{cplx3_tau} Complex motion: Rotation (left) and
translation (right) control.}
\end{figure}
\begin{figure}[tb]
\psfrag{h}{$\sum \|\hn_i - \hn_i^* \|$}
\psfrag{u}{$\sum \| \unit{u}_i \times \unit{u}_i^* \|$}
\centerline{%
\rotatebox{-90}{\includegraphics[width=0.21\textwidth]{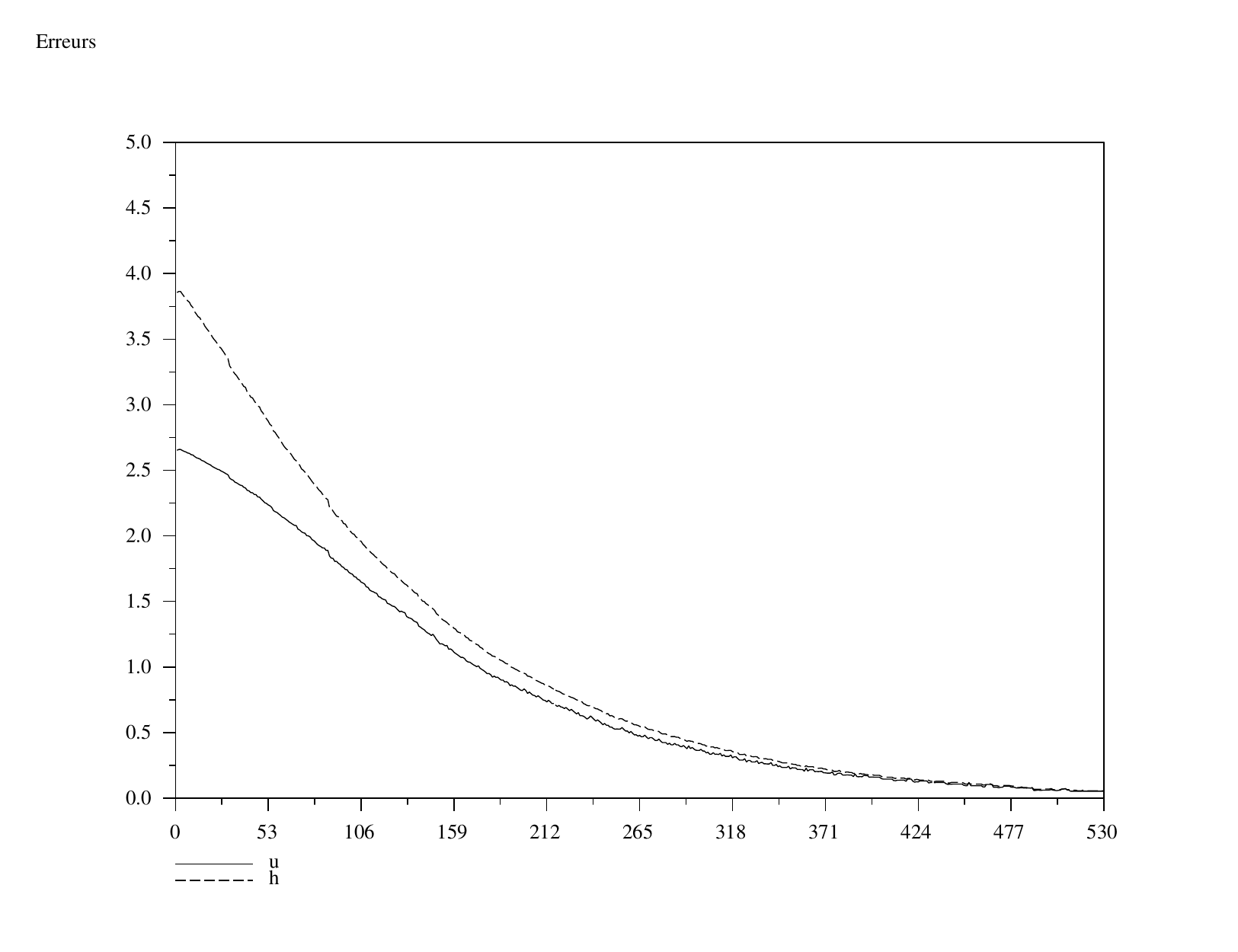}}
\rotatebox{-90}{\includegraphics[width=0.21\textwidth]{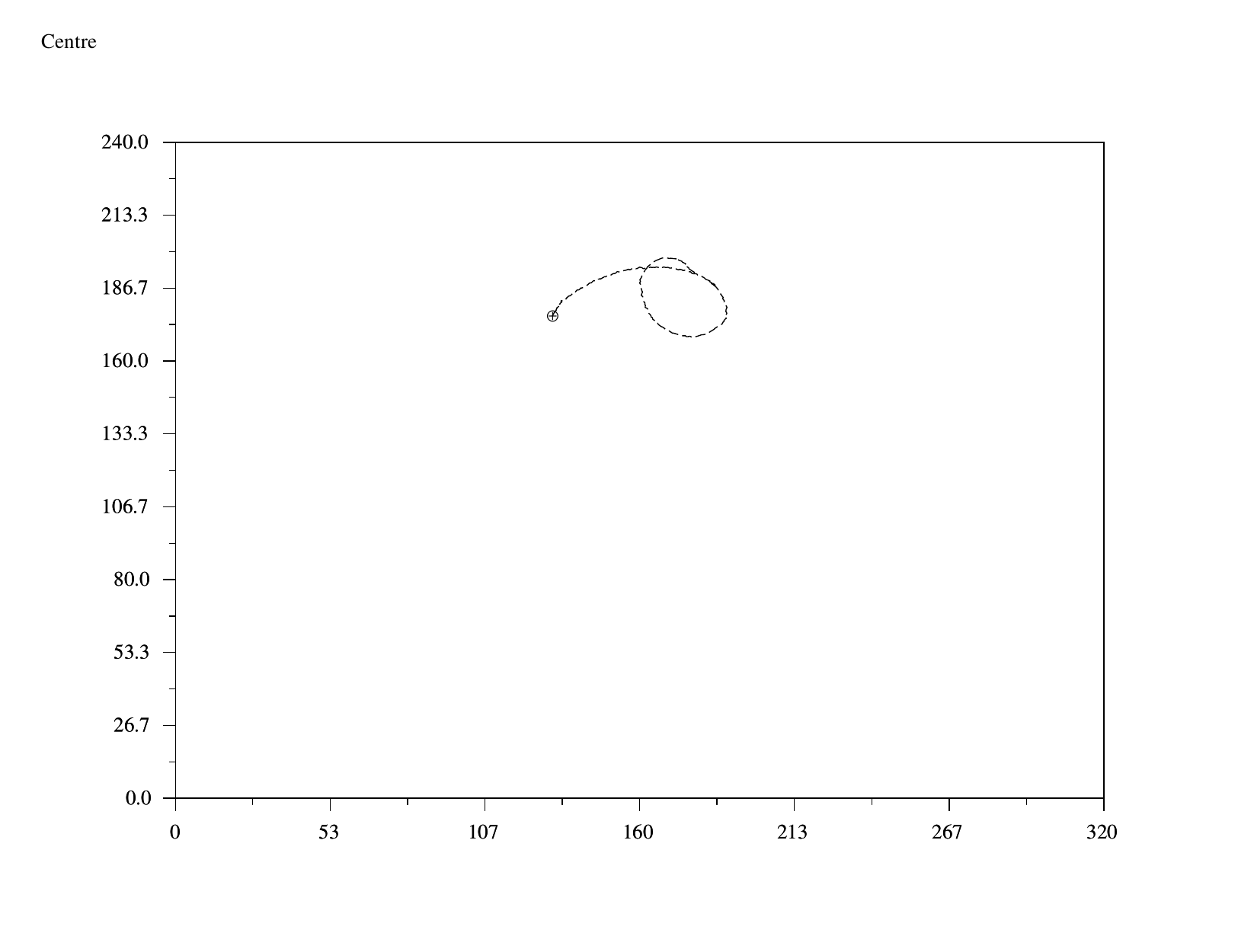}}
}
\caption{\label{cplx3_err} Complex motion: Errors (left) and
image trajectory of the trihedron center (right).}
\end{figure}

\paragraph{Pure rotation --}
In the same experimental conditions, we rotated the camera 100deg
along its optical axis away from its desired position.
One can notice that the orientation control operates, as expected,
solely along the optical axis (Figure~\ref{rot_tau}, left) and 
yields an asymptotically stable translation control
(Figure~\ref{rot_tau}, right), which keeps the trihedron visible
(Figure~\ref{rot_err}, right).

It is noticeable that the rotation control does not immediately
converge exponentially but starts with a slight acceleration
(Figure~\ref{rot_tau}, left). Nevertheless, both orientation error and
error in the image decrease (Figure~\ref{rot_err}, left).
Indeed, the residual angle between the initial and desired orientations is
larger than $\pi/2$. Hence,  the orientation control, which is of the
form $\un \times \un^*$ and thus proportionnal to the sine of the
residual angle, will increase in norm until the residual angle reaches
$\pi/2$ and then will decrease exponentially.

\paragraph{Complex motion --}
We end up this experimental section with a complex motion, in the
sense that it is composed of a translation of some 50cm and a
rotation of about 80deg  (Figure~\ref{consigne_cplx3}).
Since the initial camera position
is close to one the trihedron edges, it is challenging both for the 
tracking and the control algorithms. Indeed, the tracking algorithm
must avoid confusing the two image lines that are close to each other.
As for the control point of view, the difficulty comes from the fact
that this initial position is close to a singularity (\S~\ref{triedre}).

%
%\begin{figure}[hbtp]
%\centerline{
%\caption{\label{pos_init_bord} Une image initiale
%difficile. N.B.~: La consigne dans l'image est représentée par le petit
%trièdre tricolore.
%}
%\end{figure}

The orientation control is now again strictly decreasing
(Figure~\ref{cplx3_tau}, left) since the
residual angle is lower than $\pi/2$. On the opposite, the translation
control is disturbed by the orientation control
(Figure~\ref{cplx3_tau}, right) but still converges. Both orientation
errors and errors in the image are strictly decreasing
(Figure~\ref{cplx3_err}, left). As for
the image trajectory of the trihedron center, it is not as nice as in
the previous cases (Figure~\ref{cplx3_err},  right) and depends on the
relative choice of the control gains, $\mu_u$ and $\mu_h$. Indeed, this
trajectory was obtained with $\mu_u = 0.008$ and $\mu_h=0.01$ 
while the choice  $\mu_u=\mu_h=0.01$ did not keep the trihedron
visible. 

\section{Conclusion and Perspectives}
In this work, we achieved a theoretical and geometrical study
of several line representations in order to
determine the one which is best suited to visual servoing.
This led us to define a new representation, the so-called
binormalized Pl\"ucker coordinates, which has two advantages from the
control point of view. The first one is that it allowed us to define
properly the concept of image line alignment, which gathers both 2D and
3D information. The second one is that we could define a control law
which realizes such an  image line alignment with some very
interesting properties. First, the proposed law is an analytic
inversion of the motion equations and does not need any numerical
inversion of a Jacobian. Second, it shows a partial decoupling between
translation and rotation. Third, global asymptotic stability
conditions could be found, with a geometrical interpretation in the
particular case of the orthogonal trihedron.
This control law was validated both in simulation and experimentally.

However, there still remain several issues to be developed.
In a first time, stability conditions should be extended to the case
where orientation and translation controls are activated
simultaneously. This may be eased by recent results in the field of
cascaded systems\citep{panteley98}. In parallel, one could think of
studying the robustness of the proposed control law to measurement
and/or calibration errors. Another point to be adressed consists of
the line orientation computation in the general case, one simple way
being to extend our result to the stereoscopic case.
As for the practical side, we still need to reduce drastically the cycle
time of the control loop.

\appendix

\section{Proofs}

\subsection{Theorem~\ref{theoreme_rotation}}
\label{preuve_theo_rot}

Firstly, let us state the following lemma:
{\lemma
\label{lemme_inter}
For all configurations of rigidly linked lines:
\begin{equation}
 \sum_{i=1}^n \un_i \times \un_i^* = 0
\Rightarrow
 \unit{u}_i = \pm \unit{u}_i^*, i=1..n
\end{equation}
}
\\
{\bf Proof :} Recall the Rodrigues formula, which gives the image
$\vect{v}'$ of a vector $\vect{v}$ by a rotation of axis $\unit{n}$
and angle $\theta$:
\begin{equation}
\label{rodrigues}
\vect{v}' = \vect{v} + \sin \theta ( \unit{n} \times \vect{v} ) + ( 1
- \cos \theta ) \unit{n} \times ( \unit{n} \times \vect{v} )
\end{equation}

Since we consider a set of rigidly linked lines, the $\unit{u}_i$'s
differ from the $\unit{u}_i^*$'s by a unique rotation. Hence, we can
apply~(\ref{rodrigues}) for all $i=1..n$:
\begin{equation}
\unit{u}_i = \unit{u}_i^* + \sin \theta ( \unit{n} \times \unit{u}_i^* ) + ( 1
- \cos \theta ) \unit{n} \times ( \unit{n} \times \unit{u}_i^* ),
\quad i=1..n
\end{equation}

Building for all $i=1..n$, the crossproduct $\unit{u}_i \times
\unit{u}_i^*$ and expressing $\unit{u}_i^*$ on the orthogonal
basis $(\unit{n},  \unit{n} \times \unit{u}_i^* , \unit{n} \times (
\unit{n} \times \unit{u}_i^* ) )$ yields:
\begin{eqnarray*}
\unit{u}_i \times \unit{u}_i^* &=& \sin \theta \Big[ (\unit{n}^T
\unit{u}_i^*)^2 - 1\Big] \unit{n} \\
&+& \Big[ (1- \cos \theta) \unit{n}^T
\unit{u}_i^* + \alpha_i \unit{n}^T \unit{u}_i^* \Big]
\underbrace{( \unit{n} \times \unit{u}_i^* )}_{\perp \unit{n}} \\
&+& 
\Big[ \beta_i  \unit{n}^T \unit{u}_i^* \Big] \underbrace{\unit{n} \times (
\unit{n} \times \unit{u}_i^* )}_{\perp \unit{n}}
\end{eqnarray*}
Consequently, if the sum of these crossproducts is zero, then
\begin{equation}
\sum_{i=1}^n \sin \theta \Big[ (\unit{n}^T \unit{u}_i^*)^2 - 1\Big]=0
\end{equation}
From this equation, we deduce immediately that either $\sin \theta =0$
or $\unit{u}_i^* = \pm \unit{n}, \forall i=1..n$. Discarding the
second case which can be shown to be a degenerate case of the first
one, we obtain the final result as an interpretation of the first
case. \hfill $\Box$

Now, let us prove Theorem~\ref{theoreme_rotation}.

Consider the Lyapunov function $L=\frac{1}{2} \sum_{i=1}^n \|
\unit{u}_i - \unit{u}_i^*\|^2$.
Its derivative is:
\begin{equation}
\label{dLu}
\dot{L}_u = - \sum_{j=1}^{n} \dot{\unit{u}}_j^T  \unit{u}^*_j 
\end{equation}
From the dynamical equation~(\ref{dui}) and the control
definition~(\ref{rotation_control}), we can write~(\ref{dLu}) as:
\begin{equation}
\dot{L}_u = - \mu_u \sum_{j=1}^n \left( \left(\sum_{i=1}^n \un_i
\times \un_i^*\right) \times \un_j \right)^T \unit{u}_j^*
\end{equation}
Using the following property of the mixt product:
\begin{equation}
(\vect{u} \times \vect{v})^T \vect{w} = (\vect{v} \times \vect{w})^T \vect{u}
\end{equation}
we obtain:
\begin{equation}
\dot{L}_u = - \mu_u \left(\sum_{j=1}^n \un_j \times \unit{u}_j^*
\right)^T  \left(\sum_{i=1}^n \un_i
\times \un_i^*\right)
\end{equation}

Consequently, $\dot{L}$ is strictly negative unless $\sum_{i=1}^n \un_i
\times \un_i^*=0$. From Lemma~\ref{lemme_inter}, this means that
$\dot{L}$ is strictly negative unless $\un_i \neq \pm \un_i^*$. We
find here again the same unstable initial stationnary point as in the
single line case. 
Thus, the control~(\ref{rotation_control}) is asymptotically stable
provided that $\un_i(t=0) \neq - \un_i^*$, $\forall i=1..n$.
\hfill $\Box$

\subsection{Theorem~\ref{theoreme_translation}}
\label{preuve_theo_trans}

Consider the Lyapunov function defined by:
\begin{equation}
L_h = \frac{1}{2}\sum_{j=1}^n \| \hn_j - \hn_j^*\|^2
\end{equation}
the derivative of which is:
\begin{equation}
\dot{L}_h = - \sum_{j=1}^n \dot{\hn}_j^T \hn_j^*
\end{equation}

From (\ref{dhni}), where $\Vrot=0$, we have:
\begin{equation}
\dot{L}_h = - \sum_{j=1}^n \left(-\frac{\Vlin^T \hn_j}{h_j}(\un_j \times
\hn_j)\right)^T 
\hn_j^* 
\end{equation}
which rewrites:
\begin{equation}
\label{lyapunov}
\dot{L}_h =\sum_{j=1}^n\frac{\Vlin^T \hn_j}{h_j} \epsilon_j
\end{equation}
Inserting the control (\ref{cde_transl}) yields:
\begin{equation}
\dot{L}_h = - \mu_h \sum_{j=1}^n\frac{\left( \sum_{i=1}^n \matx{B}_i \epsilon_i
\hn_i \right)^T \hn_j}{h_j} \epsilon_j
\end{equation}
%
%Noting $\matx{W}=\left( \begin{smallmatrix}
%1/h_1&&\\
%&\ddots&\\
%&&1/h_n
%\end{smallmatrix} \right)$,
%we finally obtain:
%\begin{equation}
%\dot{L}_h = -  \mu_h \left( \begin{smallmatrix}
%\epsilon_1 \hn_1\\ \vdots \\ \epsilon_n \hn_n 
%\end{smallmatrix} \right)^T \left( \begin{matrix}
%\matx{B}_1, \hdots, \matx{B}_n 	
%\end{matrix} \right)^T \left( \begin{matrix}
%\id{3}, \hdots, \id{3} 
%\end{matrix} \right)
%\matx{W}\left( \begin{smallmatrix}
%\epsilon_1 \hn_1\\ \vdots \\ \epsilon_n \hn_n 
%\end{smallmatrix} \right)
%\end{equation}
%
This expression also rewrites under the following form which gives
the analytic condition for stationnary points to exist (i.e. when $\dot{L}_h
=0$):
\begin{equation}
\dot{L}_h = -  \mu_h \left( \sum_{i=1}^n \matx{B}_i \epsilon_i \hn_i
\right)^T
\left(  \sum_{i=1}^n \frac{ \epsilon_i  \hn_i }{h_i}\right)
\end{equation}

\hfill $\Box$

\subsection{Theorem~\ref{control_trihedron}}
\label{control_trihedron_proof}
Consider the same Lyapunov function $L$ as in the previous proof.
Then its derivative is given by~(\ref{lyapunov}) independently
from the translation control (provided that rotation control has converged). 
Inserting the control~(\ref{simple_control})
in~(\ref{lyapunov}) gives
\begin{equation}
\dot{L}_h = - \mu_h \sum_{j=1}^3\frac{\left( \sum_{i=1}^3 \epsilon_i
\hn_i \right)^T \hn_j}{h_j} \epsilon_j
\end{equation}
Noting $\matx{P}_j=\id{3}-\hn_j\hn_j^T$, the projection operator orthogonal
to $\hn_j$, and $\vect{e}_j=\hn_j-\hn_j^*$, we remark that:
\begin{equation}
\epsilon_j \hn_j = \un_j \times (\matx{P}_j \vect{e}_j ), \ \forall j
\end{equation}
Then, using this equation, we can rewrite the expression of $\dot{L}$ as
\begin{align}
\dot{L}_h &= - \mu_h \sum_{j=1}^3\frac{1}{h_j}
\left( \sum_{i=1}^3  \un_i \times(\matx{P}_i \vect{e}_i )\right)^T
\left(\un_j \times (\matx{P}_j \vect{e}_j )\right) \\
&=- \mu_h \sum_{j=1}^3\frac{1}{h_j}
\sum_{i=1}^3\left(\un_i \times(\matx{P}_i \vect{e}_i )\right)^T
\left(\un_j \times (\matx{P}_j \vect{e}_j )\right) \\
&=- \mu_h \sum_{j=1}^3\frac{1}{h_j}\sum_{i=1}^3
\left[\left(\un_i \times(\matx{P}_i \vect{e}_i )\right) \times \un_j\right]^T
(\matx{P}_j \vect{e}_j ) \\
&=- \mu_h \sum_{j=1}^3\frac{1}{h_j}\sum_{i=1}^3
\left[(\un_i^T\un_j)(\matx{P}_i \vect{e}_i)-\un_j^T(\matx{P}_i
\vect{e}_i)\un_i\right]^T (\matx{P}_j \vect{e}_j )
\end{align}
which finally gives:
\begin{equation}
\dot{L}_h = - \mu_h \sum_{j=1}^3\frac{1}{h_j}
\sum_{i=1}^3 \left[ (\un_i^T\un_j-\un_i\un_j^T)(\matx{P}_i \vect{e}_i)
\right]^T (\matx{P}_j \vect{e}_j )
\end{equation}

Define now
\[
E = \begin{pmatrix}
    \matx{P}_1 \vect{e}_1\\  \matx{P}_2\vect{e}_2\\  \matx{P}_3\vect{e}_3
    \end{pmatrix}, \quad
U =  \begin{pmatrix}
    \unit{u}_1\\ \unit{u}_2\\ \unit{u}_3
    \end{pmatrix}
\]
and the operator:
\begin{align*}
\matx{P}_U  & = \frac{1}{2}(\id{9} - U \otimes U^T) \\
& = 
\begin{pmatrix}
\id{3} - \unit{u}_1 \unit{u}_1^T&- \unit{u}_2 \unit{u}_1^T&-
\unit{u}_3 \unit{u}_1^T\\
- \unit{u}_1 \unit{u}_2^T&\id{3} - \unit{u}_2\unit{u}_2^T&-
\unit{u}_3 \unit{u}_2^T\\
- \unit{u}_1 \unit{u}_3^T& - \unit{u}_2\unit{u}_3^T& \id{3} -
\unit{u}_3 \unit{u}_3^T
\end{pmatrix}
\end{align*}
One can easily prove that $\matx{P}_U$ is an orthogonal projection onto the
kernel of $U^T$, hence on the hyperplane $\Pi$ orthogonal to $U$.

After convergence of the rotational control, we have
\[
	\unit{u}_j = \unit{u}_j^*, \quad j=1..3
\]
from which we deduce that
\begin{equation}
\label{eTu}
\unit{u}_j^T (\matx{P}_j \vect{e}_j) = 0, \quad j=1..3
\end{equation}

Using this result and the trihedron orthogonality, we obtain
\begin{equation}
\dot{L} = - 2 \mu_h \tilde{E}^T \matx{P}_U E
\end{equation}
where $\tilde{E}$ is the vector obtained by concatenating the
$\tilde{\vect{e}}_j = \frac{1}{h_j}\matx{P}_j \vect{e}_j$ for all $j=1..3$.

From (\ref{eTu}), we find that $E$ and $U$ are orthogonal, hence
$E \in \Pi$ and $\matx{P}_U E = E$. This simplifies the expression
of $\dot{L}$ into:
\[
\dot{L} = -2 \mu_h \tilde{E}^T E = -2 \mu_h \sum_{j=1}^3 \frac{1}{h_j} \vect{e}'{}_j^T \vect{e}'_j
\]
where $\vect{e}'_j = \matx{P}_j \vect{e}_j$.

Consequently, this sum is negative and only vanishes when all $\vect{e}'_j$ 
are zero. After the convergence of the rotation control, this is only
possible when $\hn_j = \pm \hn_j^*$.

\hfill $\Box$

%% SUPPR %% \section{Notation in Proposition~\ref{sdot}}
%% SUPPR %% \label{notations_prop_sdot}
%% SUPPR %% 
%% SUPPR %% \begin{figure}[tp]
%% SUPPR %% {\LARGE
%% SUPPR %% \psfrag{il}{$\vect{i}_l$} 
%% SUPPR %% \psfrag{jl}{$\vect{j}_l$}
%% SUPPR %% \psfrag{kl}{$\vect{k}_l$}
%% SUPPR %% \psfrag{z}{$\vect{z}_l$}
%% SUPPR %% \psfrag{a}{$\alpha_y$}
%% SUPPR %% \psfrag{g}{$\gamma$}
%% SUPPR %% \psfrag{th}{$\delta \theta_x$}
%% SUPPR %% \psfrag{h1}{$h_1$}
%% SUPPR %% \psfrag{h2}{$h_2$}
%% SUPPR %% \psfrag{h3}{$h_3$}
%% SUPPR %% \psfrag{n}{$\vect{n}$}
%% SUPPR %% \psfrag{dz}{$d\vect{z}_l$}
%% SUPPR %% \psfrag{dx}{$dx$}
%% SUPPR %% \centerline{\resizebox{!}{6cm}{
%% SUPPR %% \includegraphics{derives_pt_laser-eps-converted-to.pdf.gz}}}
%% SUPPR %% }
%% SUPPR %% \caption{\label{derivees} Variation du faisceau laser lors 
%% SUPPR %% d'une translation selon l'axe X (gauche) et d'une rotation autour de cet axe
%% SUPPR %% (droite)}
%% SUPPR %% \end{figure}

\section*{Acknowledgments}
This work was supported by the European 
  Community through the Esprit-IV reactive LTR project number 26247
(VIGOR). This work was completed while Nicolas Andreff was a Ph.D. student at INRIA
Rh\^one-Alpes.

\bibliographystyle{spbasic}
%\bibliography{avdroites,tracking}

\end{document}